\newtheorem{theorem}{Theorem}
\newtheorem{lemma}[theorem]{Lemma}
\newtheorem{proposition}[theorem]{Proposition}
\newtheorem{assumption}{Assumption}
\def\BibTeX{{\rm B\kern-.05em{\sc i\kern-.025em b}\kern-.08em
    T\kern-.1667em\lower.7ex\hbox{E}\kern-.125emX}}
\begin{document}

\title{Collaborative Bayesian Optimization via Wasserstein Barycenters}
% \\
% {\footnotesize \textsuperscript{*}Note: Sub-titles are not captured in Xplore and
% should not be used}
% \thanks{Identify applicable funding agency here. If none, delete this.}
% }

\author{\IEEEauthorblockN{Donglin Zhan, Haoting Zhang, Rhonda Righter, Zeyu Zheng, and James Anderson
\thanks{Donglin Zhan and James Anderson are with the Department of Electrical Engineering at Columbia University, New York, USA. Emails: \texttt{\{donglin.zhan,james.anderson\}@columbia.edu}. Haoting Zhang, Rhonda Righter, and Zeyu Zheng are with the Department of Industrial Engineering and Operation Research at the University of California at Berkeley, Berkeley, USA. Emails: \texttt{\{haoting\_zhang,rrighter,zyzheng\}@berkeley.edu}.}}

}

\maketitle

\begin{abstract}
Motivated by the growing need for black-box optimization and data privacy, we introduce a collaborative Bayesian optimization (BO) framework that addresses both of these challenges. In this framework agents work collaboratively to optimize a function they only have oracle access to. In order to mitigate against communication and privacy constraints, agents are not allowed to share their data but can share their Gaussian process (GP) surrogate models. To enable collaboration under these  constraints, we construct a central model to approximate the objective function by leveraging the concept of Wasserstein barycenters of GPs. This central model integrates the shared models without accessing the underlying data. A key aspect of our approach is a collaborative acquisition function that balances exploration and exploitation, allowing for the optimization of decision variables collaboratively in each iteration. We prove that our proposed algorithm is asymptotically consistent and that its implementation via Monte Carlo methods is numerically accurate. Through numerical experiments, we demonstrate that our approach outperforms other baseline collaborative frameworks and is competitive with centralized approaches that do not consider data privacy.
\end{abstract}

\section{Introduction}

In numerous engineering applications, including materials design \cite{frazier2016bayesian}, control engineering\cite{ghoreishi2020bayesian}, robotics \cite{martinez2009bayesian} and machine learning parameter tuning \cite{snoek2012practical}, one must solve optimization problems where either the objective function is costly to evaluate, or, is unknown and only oracle access is available.  A widely used method in this setting is Bayesian optimization (BO), which employs a ``surrogate model'' to approximate the unknown objective function \cite{frazier2018bayesian}. One of the most widely-used surrogates for BO is the Gaussian process (GP) model \cite{williams2006gaussian}, which assumes that the objective function is a realization of a GP. In conjunction with the surrogate model, an ``acquisition function'' uses predictions from the surrogate model to determine the next sample point. The BO algorithm is implemented by iteratively collecting observations, updating the model, and optimizing the acquisition function to select the decision variable to collect another observation.

In various control and machine learning scenarios, multiple agents collaborate to optimize an unknown function. For instance, agents may need to collaboratively fine-tune controller parameters in distributed control architectures, such as multi-agent robotic systems \cite{liang2021secure} or decentralized energy management systems \cite{chen2012optimal}. Another example would be fine-tuning the hyperparameters of several different neural network architectures in a parallel computing environment \cite{wu2016parallel}. To facilitate this collaboration, observations at evaluated points are shared to update a joint surrogate model. The acquisition function is designed to select multiple points for parallel evaluation by multiple agents in the next iteration. In many cases, data privacy might be a concern, in which case agents are not allowed to share the data directly when updating the surrogate model. An illustrative example is found in autonomous vehicle platooning, where each vehicle locally optimizes its control parameters based on sensor data that cannot be directly shared due to privacy and security regulations \cite{pan2022privacy}. Similarly, in distributed power grids, individual units optimize local control parameters without exchanging sensitive consumption or operational data directly, adhering strictly to privacy and security constraints \cite{du2018distributed, zhou2019differential}.

Data privacy concerns have been discussed extensively in the literature on federated learning (FL), c.f.,  \cite{mcmahan2017communication,li2020federated}. In FL, instead of sharing the data, each agent learns a model from its own dataset and shares the resulting model. The central model is constructed from the models shared by the agents, aiding in approximating the unknown function and facilitating downstream applications, including optimization. Existing federated learning algorithms primarily rely on parametric models, such as neural networks and kernel regression models \cite{zhao2018federated,yuan2020federated}. For example, FedAvg is a foundational algorithm that directly averages local updates to achieve a global model \cite{mcmahan2017communication}, and FedProx addresses agent heterogeneity by adding a proximal term to stabilize the learning process across diverse data distributions \cite{li2020federated}.
In these cases, agents share the learned parameters representing their surrogate models, and the central model is constructed by adaptively averaging these parameters using various methodologies. In contrast, the GP models used in BO algorithms are nonparametric, which poses a challenge for existing methods to construct a central model from the surrogate models shared by agents. In this work, we ask the question: \emph{Is it possible to collaboratively learn in a Bayesian Optimization setting without sharing data?}

\textbf{Contribution.} We propose a framework for collaborative Bayesian Optimization (BO) with data privacy considerations. In this framework, agents approximate the objective function using GP models. Instead of sharing data, the agents share these GP models with a (central) server. The server then constructs a central GP model by leveraging the concept of the Wasserstein barycenter. Specifically, we treat GPs as probability measures and represent the central model as the probability measure that minimizes the squared Wasserstein distance to the local GP models. This central model integrates the local models without directly sharing data. Furthermore, the central model remains a GP and provides explicit uncertainty quantification. We propose a collaborative acquisition function to select decision variables for agents to collect observations in parallel. The proposed acquisition function not only addresses the exploration-exploitation trade-off during the optimization procedure but also leverages the central model while maintaining differences through consideration of the local models. Our main contributions are as follows: 
\begin{itemize}[leftmargin=*]
    \item A collaborative BO framework with data privacy considerations, where the central model is constructed as the Wasserstein barycenter of GPs and remains a GP. This GP structure allows for explicit uncertainty quantification and facilitates the exploitation-exploration trade-off during the BO implementation.

    \item A collaborative acquisition function that selects decision variables for agents to collect observations in parallel, focusing on the collaborative knowledge gradient (Co-KG) function and proving the consistency of the framework based on Co-KG. We use a Monte Carlo method to approximate Co-KG, proving the consistency of the approximation.

    \item  Our experimental results show that Co-KG achieves the best performance compared to other collaborative acquisition functions. We provide practical suggestions for selecting the hyperparameters for Co-KG implementation. Results indicate that our approach not only outperforms existing baseline approaches but also achieves comparable performance to BO algorithms where agents can directly share data to update the GP model without data privacy concerns.
\end{itemize}

\subsection{Related Literature}
BO has been widely applied to solve black-box optimization problems, including materials/engineering design \cite{frazier2016bayesian} and parameter tuning \cite{snoek2012practical}. The BO approach involves a statistical surrogate model, typically a GP, learned from observations at evaluated points, and an acquisition function, constructed from the surrogate, for deciding the next evaluation point. The algorithm iteratively collects observations, updates the model, and optimizes the acquisition function, with common acquisition functions including expected improvement \cite{jones1998efficient}, knowledge gradient \cite{frazier2009knowledge} and upper confidence bound \cite{srinivas2010gaussian}. Despite its popularity, BO approaches have been largely restricted to moderate-dimensional problems due to the computational complexities brought by Gaussian processes (GPs). Thus, large streams of work have focused on high-dimensional BO, where the approximation of the covariance matrix is employed or an additive/sparse structure of the GP is imposed; see \cite{wang2018batched,jimenez2023scalable}. There have also been many recent contributions to multi-objective/task BO \cite{daulton2022robust,lin2022preference}, budgeted BO \cite{astudillo2021multi}, and multi-fidelity BO \cite{kandasamy2017multi}. Additionally, BO within a collaborative framework has also been explored. Work in \cite{dai2020federated} casts BO in a FL setting, where agents are randomly selected at each iteration, and the central GP model is approximated by a parametric model. \cite{yue2025collaborative} considers optimizing a weighted mean of acquisition functions based on multiple GPs to facilitate collaboration. \cite{zerefa2025distributed} enhances the efficiency of agents with constrained communication graphs via distributed Thompson sampling. %\ja{\textbf{[We need to say how our work differs...]}}

\section{Problem Statement \& Main Procedure}
We consider the black-box optimization problem:  $$\max_{x\in\mathcal{X}}~f(x),$$ where $x\in\mathbb{R}^d$ is the decision variable, $\mathcal{X}$ is the continuous feasible set, and $f: \mathcal{X}\mapsto \mathbb{R}$ is a black-box objective function. Each evaluation of $f(x)$ is expensive, so the exploitation-exploration trade-off is a concern. In addition, each evaluation includes observation noise, i.e., for the  $t$-th {sample} we observe $y_t = f\left(x_t\right)+\epsilon_t.$ and {we assume a Gaussian noise model where $\epsilon_t\stackrel{i.i.d.}{\sim}\mathcal{N}\left(0,\sigma ^2_\epsilon\right)$.}

We use a Gaussian process (GP) to model the objective function. That is, the objective function $f(x) \sim \mathcal{GP}\left(0,K\left(x,x'\right)\right),$ where $K\left(x,x'\right) \doteq \operatorname{Cov}\left(f\left(x\right),f\left(x'\right)\right)$ is a pre-specified kernel function that quantifies the similarity of the surrogate model $f$ evaluated at different decion variables $x$ and $x'$. A common selection is the radial basis function (RBF) kernel 
\begin{equation}
    \label{eq.rbf}K_{\operatorname{RBF}}\left(x,x'\right) = \exp\left\{-\frac{\left\|x-x'\right\|^2}{\sigma^2}\right\},
\end{equation}
where $\left\|\,\cdot\,\right\|$ denotes the Euclidean norm of a vector and $\sigma^2$ is a user-specified hyperparameter. The kernel function $K\left(x,x'\right)$ is associated with an operator $\Phi_{K}: L^2(\mathcal{X}) \rightarrow L^2(\mathcal{X})$ with
$$
[\Phi_{K} \phi](x)=\int_\mathcal{X} K(x, s) \phi(s)\, \mathrm{d} s\quad \forall \phi \in L^2(\mathcal{X}),
$$
where $L^2(\mathcal{X})$ is the space of functions that are square-integrable over the domain $\mathcal{X}$. We refer to \cite{seeger2004gaussian,van2008reproducing} for detailed discussions of the kernel operator.

The GP model enjoys explicit uncertainty quantification associated with the function approximation. Conditional on the historical dataset $\tilde {\mathcal{S}} = \left \{ \left(x_1,y_1 \right),\ldots,\left(x_t,y_t  \right)\right \}$, the objective function $f(x)$ remains a GP model: 
\begin{equation}
    \label{postgp}
    f(x)\mid\tilde{\mathcal{S}} \sim \mathcal{GP}\left(\tilde{\mu}\left(x\right),\tilde{K}\left(x,x'\right)\right),
\end{equation}
where 
%\begin{equation}
  \begin{align}
        &\tilde{\mu}\left(x\right) = \bm{K}_t\left ( x\right )^{\top} \left(\tilde{\bm{K}}_t+\sigma^2_\epsilon\bm{I}_t\right)^{-1} \tilde{\bm{y}}_t,\label{eq.gp} \\
        &\tilde{K}\left(x,x'\right)\! =\! K\left ( x, x'\right )   \!-\! \bm{K}_t\left ( x\right )^{\top} \left(\tilde{\bm{K}}_t+\sigma^2_\epsilon\bm{I}_t\right)^{-1}\!\!\bm{K}_t\left ( x'\right ). \nonumber
    \end{align}
%\end{equation}
Here $\tilde{\bm{y}}_t \!\!=\!\! \left ( y_1,\ldots,y_t \right )^{\top}\!\in\! \mathbb{R}^t$ denotes the {aggregated}  observation {vector}; $\tilde{\bm{K}}_t\in\mathbb{R}^{t\times t}$ is the kernel matrix with $\left(\tau,\tau'\right)$-th entry $K\left(x_{\tau},x_{\tau'}\right )$; and $\bm{K}_t\left ( x\right ) \! \!= \!\left ( K\left(x,x_1\right ),K\left(x,x_2\right ),\ldots,K\left(x,x_t\right ) \right )^{\top}\!\!\in\!\mathbb{R}^t$ is the vector of kernel function values between $x$ and $\left\{x_{\tau}\right\}_{\tau=1}^{t}$. With the explicit inference of $f(x)$ as in (\ref{postgp}), an acquisition function is constructed to account for both exploitation and exploration. A popular choice of  acquisition function is the knowledge gradient (KG) \cite{frazier2009knowledge}, which is defined by
\begin{equation}\label{eq.initialkg}
    \alpha_{\text{KG}}\left ( x \right )  = \mathbb{E}_{\tilde{\mathcal{S}}}\left[\max_{x'\in\mathcal{X}}~\mathbb{E}\left[f\left ( x' \right )\mid \tilde{\mathcal{S}}_x^*\right]\right],
\end{equation}
where $\tilde{\mathcal{S}}$ represents the historical observations, and $\tilde{\mathcal{S}}_x^* = \tilde{\mathcal{S}}\cup\left\{\left(x,y(x)\right)\right\}$ is the updated dataset if an additional observation $y(x)$ is decided to be collected at the point $x$. In this manner, the posterior mean $\mathbb{E}\left[f\left ( x' \right )\mid \tilde{\mathcal{S}}_x^*\right]$ serves as an approximation of the objective function with a future observation $(x,y(x))$ taken into consideration, and the acquisition function (\ref{eq.initialkg}) is maximized to select the next decision variable.

In this work, we specifically consider a collaborative framework, where there are $N$  agents  {independently, locally, and and in parallel} sampling decision variables  to maximize an identical black-box objective function $f(x)$. At each iteration, these agents communicate through a server. The server collects information (which will be specified later) from the agents and then decides which decision variables to sample for them. We denote the $t$-th data pair collected by the $n$-th agent as $\left(x_{n;t}, y_{n;t}\right)$.  {In the setting where data privacy is not an issue, } all such pairs $\left(x_{n;t}, y_{n;t}\right)$ are sent to the server. The server then constructs a central GP model from these samples.  {This is precisely the  parallel BO or batch BO \cite{wu2016parallel,daulton2021parallel} algorithm, where the server selects a batch of $N$ decision variables $x_{n;t+1}$'s at each iteration, and there is no distinction between the agents.}

 {In contrast, }our work assumes data privacy \emph{is} a concern. In other words, the data $\left(x_{n;t},y_{n;t}\right)$ collected by each agent, is not allowed to be sent to the server. Instead, each local model, updated by the data collected by each agent, is shared. Specifically, we denote the historical dataset the $n$-th agent has collected as $\tilde{\mathcal{S}}_{n}$  {for $i=1,\hdots, N$.} We eliminate the dependency of the size of each dataset for notational simplicity. We denote the posterior of the black box function as $\tilde{f}_{n}\doteq f(x)\mid\tilde{\mathcal{S}}_{n}$, which is characterized by the associated posterior mean function $\tilde{\mu}_n(x)$ and the posterior kernel function $\tilde{K}_n\left(x,x'\right)$ as in (\ref{eq.gp}). We implement the collaborative framework by sending the posterior GPs from the agents to the server. Thus, in each iteration, the server receives a set of GP models $\left\{\tilde{f}_{1},\ldots,\tilde{f}_{N}\right\}$ from the agents. To attain a central model, we use the Wasserstein barycenter~\cite{puccetti2020computation} of the GPs.  {Informally, he Wasserstein barycenter can be thought of as a weighted average of probability measures}. The Wasserstein barycenter is robust to outliers and captures the central tendency of multiple distributions by respecting the underlying geometry of the data. It effectively combines multimodal distributions and aligns them before averaging, preserving distributional characteristics. Specifically, note that a GP is equivalent to a probability measure defined on $\mathcal{X}$. The central model, represented by the Wasserstein barycenter, is defined as the probability measure that minimizes the squared Wasserstein distance to local GP models:
\begin{equation}
\label{eq.c1}
    f^c = \inf_{f'\in\mathcal{P}\left(\mathcal{X}\right)}\sum_{n=1}^{N}\left[W_2\left(f',\tilde{f}_{n}\right)\right]^2.
\end{equation}
Here, $\mathcal{P}\left(\mathcal{X}\right)$ denotes the set of all probability measures on $\mathcal{X}$, and $W_2\left(\,\cdot\,,\,\cdot\,\right)$ denotes the 2-Wasserstein distance:
\begin{equation*}
W_2(\mu, \nu)\!\doteq\!\left(\inf _{\gamma \in \Gamma[\mu, \nu]} \int_{\left(x,x'\right)\in \mathcal{X} \times \mathcal{X}}\! \!\left\|x- x'\right\|^2\, \mathrm{d} \gamma\left(x, x'\right)\!\right)^{\frac{1}{2}},
\end{equation*}
where $\mu$ and $\nu$ are probability measures defined on $\mathcal{X}$, and $\Gamma[\mu, \nu]$ denotes the set of probability measures defined on $\mathcal{X} \times \mathcal{X}$, with marginal distributions $\mu$ and $\nu$; see \cite{masarotto2019procrustes}. The central model $f^c$ defined by a Wasserstein barycenter of multiple GP models is itself a GP.
\begin{proposition}[\cite{mallasto2017learning}]\label{prop1}
    Let $\left\{\tilde{f}_n\right\}_{i=1}^N$ be a set of GPs with $\tilde{f}_n \sim \mathcal{G P}\left(\tilde{\mu}_n\left(x\right),\tilde{K}_n\left(x,x'\right)\right)$. There exists a unique barycenter $f^c \sim \mathcal{GP} \left(\mu^c(x), K^c\left(x,x'\right)\right)$ defined as in (\ref{eq.c1}). If $f^c$ is non-degenerate, the associated mean function $\mu^c(x)$ and the kernel function $K^c\left(x,x'\right)$ satisfy that
$$
\mu^c(x)=\frac{1}{N}\sum_{n=1}^N \tilde{\mu}_n\left(x\right),$$
and 
$$\sum_{n=1}^N \left(\Phi_{K^c}^{\frac{1}{2}} \Phi_{\tilde{K}_n} \Phi_{K^c}^{\frac{1}{2}}\right)^{\frac{1}{2}}=N\Phi_{K^c},
$$
where $\Phi_{K}$ denotes the operator that is associated with the kernel function $K\left(x,x'\right)$.
\end{proposition}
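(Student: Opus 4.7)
The plan is to lift the finite-dimensional Bures--Wasserstein closed form for Gaussian measures to the GP setting and then solve a decoupled mean/covariance optimization problem. The starting point is the fact that for two Gaussian measures on $L^2(\mathcal{X})$ with means $\mu_1,\mu_2$ and covariance operators $\Phi_1,\Phi_2$,
\begin{equation*}
W_2^2(\mu_1,\mu_2) = \|\mu_1-\mu_2\|_{L^2}^2 + \mathrm{tr}\!\left(\Phi_1+\Phi_2-2\bigl(\Phi_1^{1/2}\Phi_2\Phi_1^{1/2}\bigr)^{1/2}\right).
\end{equation*}
First, I would argue that the infimum in (\ref{eq.c1}) is attained at a Gaussian measure: given any candidate $f'\in\mathcal{P}(\mathcal{X})$, replacing it with the Gaussian measure sharing its mean and covariance does not increase any $W_2^2$ term (a classical Gelbrich-type inequality, using that each $\tilde{f}_n$ is already Gaussian). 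Hence we may restrict to $f'\sim\mathcal{GP}(\mu',K')$ and substitute the closed form above into (\ref{eq.c1}), at which point the objective decouples additively into a mean piece $\sum_n\|\mu'-\tilde{\mu}_n\|_{L^2}^2$ and a covariance piece depending only on $\Phi_{K'}$.

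The mean subproblem is a strictly convex quadratic with unique minimizer $\mu^c(x) = N^{-1}\sum_n \tilde{\mu}_n(x)$, yielding the first formula immediately. For the covariance subproblem, set $g(\Phi)\doteq\sum_n\mathrm{tr}\bigl(\Phi+\Phi_{\tilde{K}_n}-2(\Phi^{1/2}\Phi_{\tilde{K}_n}\Phi^{1/2})^{1/2}\bigr)$. Computing the Fr\'echet derivative of the square-root term and using that the optimal-transport map $T_{A,\Phi}\doteq\Phi^{-1/2}(\Phi^{1/2}A\Phi^{1/2})^{1/2}\Phi^{-1/2}$ satisfies $T_{A,\Phi}\,\Phi\, T_{A,\Phi}=A$, the stationarity condition $\nabla g(\Phi_{K^c})=0$ rearranges to $\sum_n T_{\tilde{K}_n,\Phi_{K^c}} = N\,\mathrm{Id}$. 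Multiplying on both sides by $\Phi_{K^c}^{1/2}$ then gives precisely the fixed-point equation in the statement. Uniqueness is inherited from strict geodesic convexity of the sum-of-squared-Wasserstein functional in the Bures--Wasserstein geometry, which holds under the explicit non-degeneracy hypothesis on $f^c$.

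The principal obstacle is operator-analytic rather than algebraic: one must justify that the closed-form $W_2^2$ formula above is valid for trace-class covariance operators on the infinite-dimensional space $L^2(\mathcal{X})$, verify Fr\'echet differentiability of the operator square root on the cone of positive operators, and handle invertibility issues for $\Phi_{K^c}^{1/2}$ (which is exactly what the explicit non-degeneracy assumption rules out). These functional-analytic prerequisites are precisely what the cited reference \cite{mallasto2017learning} develops; once they are in place, the identities for $\mu^c$ and the fixed-point equation for $\Phi_{K^c}$ follow as outlined above, and the fact that $f^c$ is itself a GP is automatic from restricting the optimization to Gaussian measures.
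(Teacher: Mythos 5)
This proposition is imported verbatim from \cite{mallasto2017learning}; the paper itself contains no proof of it (the supplement only proves the consistency results), so the relevant comparison is with the standard argument in that reference --- and your sketch is essentially that argument: reduce to Gaussian candidates via the Gelbrich inequality, use the closed-form Bures--Wasserstein expression for $W_2^2$ between Gaussian measures on $L^2(\mathcal{X})$, decouple the mean and covariance subproblems, solve the mean part as a strictly convex quadratic, and obtain the operator fixed-point equation $\sum_n \bigl(\Phi_{K^c}^{1/2}\Phi_{\tilde{K}_n}\Phi_{K^c}^{1/2}\bigr)^{1/2}=N\Phi_{K^c}$ from the first-order condition $\sum_n T_{\tilde{K}_n,\Phi_{K^c}}=N\,\mathrm{Id}$ after conjugating by $\Phi_{K^c}^{1/2}$. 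That derivation of the two displayed identities is correct, and you rightly identify the operator-analytic prerequisites (validity of the $W_2^2$ formula for trace-class covariances, Fr\'echet differentiability of the operator square root, invertibility of $\Phi_{K^c}^{1/2}$) as the content supplied by the cited reference.

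The one step stated too loosely is uniqueness. You attribute it to ``strict geodesic convexity'' of $\nu\mapsto\sum_n W_2^2(\nu,\tilde{f}_n)$ and condition it on non-degeneracy of $f^c$; but the functional is only convex along generalized geodesics (not strictly convex along ordinary Wasserstein geodesics), the finite-dimensional Agueh--Carlier absolute-continuity argument has no direct analogue on the infinite-dimensional space $L^2(\mathcal{X})$, and the proposition asserts uniqueness of the barycenter unconditionally, reserving non-degeneracy only for the fixed-point characterization. Also, to conclude the minimizer over all of $\mathcal{P}$ is Gaussian (not merely that a Gaussian minimizer exists) you need the equality case of the Gelbrich bound, which forces the optimal maps to be affine and hence the candidate to be Gaussian when the $\tilde{f}_n$ are non-degenerate. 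These are exactly the points handled in \cite{mallasto2017learning} (and in \cite{masarotto2019procrustes}); with them cited, your outline is a faithful reconstruction of the proof the paper relies on.
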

Proposition \ref{prop1} defines the Wasserstein barycenter of GPs with the help of kernel operators. In practice, we compute the barycenter by discretizing GPs to multivariate normal distributions \cite{masarotto2019procrustes};  {we elaborate on this in}  Section \ref{sec.imp}.

We now introduce a general acquisition function for our framework, given by:
\begin{equation}
\label{eq.a}
\alpha(\mathbf{x}) = \underbrace{\alpha^c(\mathbf{x})}_{\text{central}} + \beta_t \sum_{n=1}^{N} \underbrace{\alpha_n(x_n)}_{\text{local}},
\end{equation}
where $\mathbf{x} = (x_1, x_2, \ldots, x_N)$ represents the joint vector of decision variables for the agents. The function $\alpha^c(\mathbf{x})$ denotes the acquisition function based on the central model, which selects $N$ decision variables jointly, while $\alpha_n(x_n)$ refers to the acquisition function for the $n$-th local model, focusing specifically on $x_n$. This approach facilitates a collaborative acquisition strategy that leverages the central model while maintaining differentiation through consideration of the local models. The hyperparameter $\beta_t$ is used to balance this trade-off in the $t$-th iteration, indicating the extent of collaboration. When $\beta_t\rightarrow 0$, the acquisition function approaches a parallel acquisition function focusing on the central model, eliminating the effects of the differences between local models. When $\beta_t\rightarrow \infty$, each decision variable $x_n$ is selected to maximize its acquisition function based on the local model, i.e., there is no collaboration between agents. 

Specifically, we consider an increasing sequence of $\beta_t$ and let $\beta_t \rightarrow \infty$. In early iterations, there is insufficient data, so the central model collects information from local models to help construct a more accurate approximation of the objective function, which facilitates the optimization procedure \cite{yue2023collaborative}. On the other hand, constructing a central GP model requires approximation (see details in the next section), which introduces additional bias and uncertainty. As the iterations progress and more data is collected, each local model becomes more reliable on its own, reducing the reliance on the central model. Therefore, we attach more weight to each local GP with increasing $\beta_t$. We also discuss the effects of this hyperparameter through numerical experiments.

Furthermore, this collaborative acquisition function is flexible in terms of the selections of $\alpha^c(\mathbf{x})$ and $\alpha_n\left(x\right)$ in different scenarios. For example, $\alpha^c(\mathbf{x})$ and $\alpha_n(x)$ can be parallel knowledge gradient ($q$-KG) \cite{wu2016parallel} and knowledge gradient (KG), parallel expected improvement \cite{wang2020parallel} and expected improvement, and parallel predictive entropy search \cite{shah2015parallel} and entropy search respectively. We compare different selections of collaborative acquisition functions in Section \ref{sec.exp1}.

We focus on the KG function due to its proven success in practical implementations. We propose an acquisition function named \textit{collaborative knowledge gradient} (Co-KG):
\begin{equation}
\label{eq.cokg}
\begin{aligned}
    \alpha_{\text{Co-KG}}\left(\mathbf{x}\right) \doteq&\mathbb{E}_{\tilde{\mathcal{S}}}\left\{\max_{x'\in\mathcal{X}}\mathbb{E}\left[f^c\left(x'\right)\mid \tilde{\mathcal{S}}^*_{\mathbf{x}}\right]\right.\\&\left.+\beta_t \left\{\sum_{n=1}^N\max_{x'\in\mathcal{X}}
    \mathbb{E}\left[f\left(x'\right)\mid\tilde{\mathcal{S}}^*_{x_n}\right]\right\}\right\}.\end{aligned}
\end{equation}
Here $\tilde{\mathcal{S}} = \cup_{n=1}^N \tilde {\mathcal{S}}_n$ represents the historical observations, $\tilde{\mathcal{S}}_\mathbf{x}^* = \cup_{n=1}^N \tilde {\mathcal{S}}^*_{x_n}$ represents the updated dataset if additional observations are collected as $\tilde{\mathcal{S}}_{x_n}^* = \tilde{\mathcal{S}}_n\cup\left\{\left(x_n,y\left(x_n\right)\right)\right\}$, consistent with the definition in (\ref{eq.kg}). In other words, $\alpha^c\left(\mathbf{x}\right)$ is selected to be $q$-KG defined on the central GP model 
$\alpha _{\text{$q$-KG} }\left ( \mathbf{x} \right ) = \mathbb{E}_{\tilde{\mathcal{S}}}\left [ \max_{x'\in\mathcal{X}} \mathbb{E}\left [ f^c\left(x'\right)\mid \tilde{\mathcal{S}} ^*_{\mathbf{x}}\right ] \right ],$
which selects $N$ decision variables in parallel based on the central GP model $f^c$. Additionally, $\alpha_n\left(x\right)$ is a regular KG function for each local GP, as defined in (\ref{eq.kg}). Although (\ref{eq.cokg}) involves the set of observations, the construction of the function is facilitated by the central and local GP models, rather than directly utilizing the data, i.e., the data privacy is preserved.

With the acquisition function defined in (\ref{eq.cokg}), our framework for collaborative BO  {(formalized in Algorithm~\ref{alg:1})} proceeds as follows: First, each agent independently and randomly collect observations and construct GP models in a warm-up stage  {(line 2)}. Then, the procedure begins to iterate. At each iteration, the server first collects the GP models from the agents  {(line 4)} to construct a central GP model  {(line 5)}. Next, the server maximizes the acquisition function (\ref{eq.cokg}) to select the decision variables $\mathbf{x} = \left(x_1, x_2, \ldots, x_N\right)$ for each agent  {(line 6)}. Consequently, each agent collects observations with the decision variable selected by the server, updates the local GP model, and the procedure moves to the next iteration. This procedure is repeated for $T$ iterations. When the last iteration terminates, each agent submits the optimizer $\hat{x}^*_n$ and the corresponding optimal value $\mu^*_n$ of $\max_{x\in\mathcal{X}}\tilde{\mu}_{n}(x)$ to the server  {(line 10)}. The server then determines $\hat{x}^*$ as the decision variable that maximizes across all agents' optimal values  {(line 11)}. Implementation  details are given in  the next section. Here we focus on the general procedure and provide  consistency results. We make the following assumptions.
\begin{assumption}
\label{assumption1}
    \begin{enumerate}\item[]
        \item The feasible set $\mathcal{X}$ is a compact set.
        \item Given the kernel function $K\left(x,x'\right)$, there exists a constant $\tau>0$ and a continuous function $\rho:\mathbb{R}^d \mapsto \mathbb{R}_{+}$ such that $K\left(x,x'\right) = \tau^2 \rho\left(x-x'\right)$. Moreover,  {$\rho$ satisfies:}
        \begin{enumerate}
            \item $\rho(|\delta|)=\rho(\delta)$, where $|\,\cdot\,|$  {is interpreted component-wise;}
            \item $\rho(\delta)$ is decreasing in $\delta$ component-wise for $\delta \geqslant \mathbf{0}$;
\item $\rho(\mathbf{0})=1, \rho(\delta) \rightarrow 0$ as $\|\delta\| \rightarrow \infty$%, where $\|\cdot\|$ denotes the Euclidean norm;
\item there exist some $0<C<\infty$ and $\varepsilon, u>$ 0 such that
$$
1-\rho(\delta) \leqslant \frac{C}{|\log (\|\delta\|)|^{1+\varepsilon}},
$$
for all $\delta$ such that $\|\delta\|<u$.
           
        \end{enumerate}
    \end{enumerate}
\end{assumption}
The second condition in the assumptions above is a standard requirement for general kernel functions, such as the RBF kernel function in (\ref{eq.rbf}). For other kernel functions that satisfy this condition, we refer to \cite{ding2022knowledge}.
\begin{theorem}
\label{thm.1}
    Under Assumption \ref{assumption1}, the collaborative BO with Co-KG summarized in Algorithm \ref{alg:1} is consistent. That is, 
    \begin{equation*}
        \lim_{T\rightarrow\infty } f\left(\hat{x}^*\right)\stackrel{a.s.}{=}\max_{x\in\mathcal{X}}f(x).
    \end{equation*}
\end{theorem}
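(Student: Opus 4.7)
The plan is to reduce consistency of the collaborative scheme to the single-agent consistency of KG-based BO and to exploit the key structural property that $\beta_t \to \infty$, which asymptotically decouples the joint acquisition into $N$ independent local KG problems.

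I would first analyze the limit behavior of the Co-KG maximizer. Write $\alpha_{\text{Co-KG}}(\mathbf{x}) = \alpha^c(\mathbf{x}) + \beta_t \sum_{n=1}^N \alpha_n(x_n)$, where $\alpha_n$ is the local KG for agent $n$ and $\alpha^c$ is $q$-KG based on the barycenter $f^c$. Under Assumption~\ref{assumption1}, both $\alpha^c$ and each $\alpha_n$ are bounded on the compact set $\mathcal{X}$ by a constant that depends only on $\sup_{\mathcal{X}} |\tilde{\mu}_n|$ and $\sup_{\mathcal{X}} \tilde{K}_n(x,x)$, and hence on finitely many observations. Dividing by $\beta_t$ shows that any maximizer $\mathbf{x}_t^* = (x_{1,t}^*,\ldots,x_{N,t}^*)$ of $\alpha_{\text{Co-KG}}$ satisfies
\begin{equation*}
\sum_{n=1}^N \alpha_n(x_{n,t}^*) \;\geq\; \sum_{n=1}^N \max_{x\in\mathcal{X}} \alpha_n(x) \;-\; \frac{2\sup_{\mathbf{x}}|\alpha^c(\mathbf{x})|}{\beta_t}.
\end{equation*}
Since the local terms are separable in $x_n$, each coordinate is an $\varepsilon_t$-maximizer of its own local KG with $\varepsilon_t = O(1/\beta_t) \to 0$. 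Thus, in the limit the procedure behaves like $N$ parallel single-agent KG-BO runs.

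Next, I would apply the existing consistency result for single-agent KG (the analogue of the one in \cite{ding2022knowledge} used to motivate Assumption~\ref{assumption1}). That result gives, under the kernel and compactness conditions of Assumption~\ref{assumption1}, that the posterior mean $\tilde{\mu}_n$ converges uniformly on $\mathcal{X}$ to $f$ almost surely when samples are selected by (exact) local KG. To accommodate the $\varepsilon_t$-approximate selection produced by Co-KG, I would either (a) cite/extend the fact that the KG consistency argument only requires asymptotic space-filling by the chosen points — which is preserved under vanishing perturbations of the acquisition function since the KG values are continuous in $x$ and the argmax set depends lower-semicontinuously on $\varepsilon_t$ — or (b) reargue via the standard route: the compactness of $\mathcal{X}$ plus the kernel continuity force the posterior variance of $f$ at every $x$ to go to zero along any subsequence of chosen points whose closure covers $\mathcal{X}$, and KG-type acquisitions under Assumption~\ref{assumption1}(2) prevent any region from being ignored indefinitely.

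Once uniform convergence $\sup_{x\in\mathcal{X}}|\tilde{\mu}_n(x) - f(x)| \to 0$ a.s.\ holds for every agent $n$, consistency of $\hat{x}^*$ follows: each local optimum $\mu_n^* = \max_x \tilde{\mu}_n(x) \to \max_x f(x)$ a.s., and the same limit is inherited by $\max_{n} \mu_n^*$ and hence by $f(\hat{x}^*)$ via $|f(\hat{x}_n^*) - \mu_n^*| \leq \sup_x |\tilde{\mu}_n - f|$. The main obstacle I expect is step~(2): carefully verifying that the $\varepsilon_t$-approximate KG selection still yields the required space-filling/posterior-variance-shrinking behavior, because the existing proof in \cite{ding2022knowledge} is stated for the exact maximizer. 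Making this robustness precise — e.g., by showing that the KG value at any point bounded away from the eventual dense sampling set is bounded below by a strictly positive constant, so the $O(1/\beta_t)$ slack cannot force such points to be chosen indefinitely — is where I would spend most of the technical effort.
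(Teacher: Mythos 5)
Your proposal follows essentially the same route as the paper's proof: use $\beta_t\to\infty$ together with an (almost surely) uniform-in-$t$ bound on the central $q$-KG term to show the Co-KG maximizer is an $\varepsilon_t$-approximate maximizer of each local KG, then argue that the posterior variance must vanish everywhere because persistent variance at a point keeps that local KG bounded away from zero while the KG at the selected points (near their accumulation point) tends to zero, and finally conclude via uniform convergence of the posterior means to $f$. The one detail to tighten is your justification that $\sup_{\mathbf{x}}|\alpha^c(\mathbf{x})|$ is bounded uniformly in $t$ — it does not follow from ``finitely many observations'' since the data grow, but rather from monotonicity of the posterior variances and the a.s. uniform convergence (hence boundedness) of the conditional means, which is exactly how the paper controls the $\alpha^c/\beta_t$ term.
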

Specifically, as the iterations \( T \rightarrow \infty \), the posterior variance of each local GP model shrinks to zero everywhere, and the posterior mean function converges to the true objective function. Consequently, the limit of function value at the selected decision variable approaches the ground-truth optimal value. The proof is deferred to the supplements \footnote{The online version of the manuscript is in \url{https://github.com/jd-anderson/Collab_Bayesian_Opt/}} in the online version.

\begin{algorithm}[ht!]
\caption{Collaborative BO with Co-KG.}
\label{alg:1}
\begin{algorithmic}[1]
\STATE {\bfseries Input:} 
The prior kernel function of local GP models;
\STATE Warm-up stage: Each agent collects observations and updates GP models as in (\ref{eq.gp});

\FOR{$t=1,2,\ldots,T$}

\STATE The server collects local GP models from each agent;

\STATE The server constructs the central GP as in (\ref{eq.c1});

\STATE The server selects $\mathbf{x} = \left(x_1,x_2,\ldots,x_N\right)$ by maximizing the Co-KG function (\ref{eq.cokg});
\STATE Each agent collects a new observation at $x_n$;
\STATE Each agent updates the posterior mean $\tilde{\mu}_{n}(x)$ and the posterior kernel function $\tilde{K}_{n}\left(x,x'\right)$;

\ENDFOR
\STATE Each agent reports $\hat{x}_n^* = \arg\max_{x\in\mathcal{X}}\tilde{\mu}_{n}(x)$ and $\mu^*_n = \tilde{\mu}_{n}\left(\hat{x}_n^*\right)$ to the server;
\STATE The server outputs $\hat{x}^* = \arg\max_{x_1^*,x_2^*,\ldots,x_N^*} \mu^*_n$.

\end{algorithmic}
\end{algorithm}

\section{Implementation}\label{sec.imp}

Here we describe the process of calculating the Wasserstein barycenter of a Gaussian process and the optimization of Co-KG based on discretization, which is a standard procedure in BO literature. We discretize the feasible set $\mathcal{X}$ (the domain of GP) to $\mathcal{X}_D = \left\{x^{(1)},x^{(2)},\ldots,x^{(D)}\right\}$, where $\left|\mathcal{X}_D\right| = D$. The agents send both $\tilde{\bm{\mu}}_{n} = \left ( \tilde{\mu}_{n}\left ( x^{(1)} \right ),\tilde{\mu}_{n}\left ( x^{(2)}\right),\ldots, \tilde{\mu}_{n}\left ( x^{(D)} \right)\right )^{\top}\in \mathbb{R}^{D} $ and $$\tilde{\bm{K}}_n\!\! =\!\! \begin{pmatrix}
  \tilde{K}_n\left ( x^{(1)},x^{(1)} \right ) & \!\dots\!  & \tilde{K}_n\left ( x^{(1)},x^{(D)} \right ) \\
 \vdots & \! \ddots\!& \vdots\\
 \tilde{K}_n\left ( x^{(D)},x^{(1)} \right )  &\!\dots\!  &\tilde{K}_n\left ( x^{(D)},x^{(D)} \right ) 
\end{pmatrix}\!\in\!\mathbb{R}^{D\times D}$$
to the server, where $\tilde{\mu}_n(x)$ and $\tilde{K}_n\left(x,x'\right)$ denote the posterior mean and kernel functions associated with the $n$-th local model, updated as in (\ref{eq.gp}). We note that, in the $t$-th iteration, the posterior mean and kernel functions are constructed using more than $t$ observations since agents have independently collected observations during the warm-stage as in Algorithm \ref{alg:1}. Then, we attain the mean vector of the (discretized) central model as $\bm{\mu}^c = \frac{1}{N} \sum_{n=1}^{N}\tilde{\bm{\mu}}_n$ and the kernel matrix $\bm{K}^c$ by solving the equation
\begin{equation}
\label{eq.matrixequation}
    \sum_{n=1}^N \left({\left ( \bm{K}^c \right ) }^{\frac{1}{2}} {\tilde{\bm{K}}_n} {\left ( \bm{K}^c \right ) }^{\frac{1}{2}}\right)^{\frac{1}{2}}=N{\bm{K}^c}.
\end{equation}
The equation (\ref{eq.matrixequation}) can be efficiently solved by numerical methods. When the size of discretization $D\rightarrow\infty$, $\mathcal{N}\left (\bm{\mu}^c, \bm{K}^c \right ) $ approximates $f^c$ defined in (\ref{eq.c1}) arbitrarily well \cite{mallasto2017learning}.

This discretization also helps maintain data privacy. Specifically, given the full knowledge of (i) the posterior mean and covariance functions, and (ii) all the decision variables that have been selected, the data can be inferred as in (\ref{eq.gp}). In our implementation, since the posterior mean and covariance functions that the agents send are discretized, this inference cannot be implemented. Additionally, as in Algorithm \ref{alg:1}, there is a warm-up stage where the selected decision variables by each agent are not revealed to the server. These two steps main the data privacy requirement, i.e., the server is not able to reveal the exact values of data from the posterior mean and covariance functions sent by the agents.

We now describe the optimization of the acquisition function Co-KG defined in (\ref{eq.cokg}). Since both $q$-KG and KG involve taking expectations, and do not  {admit} an explicit  {solution},  we use a Monte Carlo (MC) based approximation. Specifically, we express the posterior mean function of the central GP model $\mathbb{E}\left [ f^c\left(x'\right)\mid \tilde{\mathcal{S}}^*_{\mathbf{x}} \right ]$ as 
\begin{equation*}
    \mu^c\left(x'\right)+ \bm{K}^c\left(\mathbf{x},x'\right) ^{\top}\left(\mathbf{\Sigma}\left(\mathbf{x}\right)\right)^{-1}\left ( y^*\left(\mathbf{x}\right)-\bm{\mu}^c\left ( \mathbf{x} \right )  \right ),\end{equation*}
where $\bm{K}^c\left(\mathbf{x},x'\right)\! =\!\left ( K^c\left ( x_1,x' \right ),\ldots,K^c\!\left ( x_N,x'\right )  \!\right ) ^{\top}\in\mathbb{R}^N $, 
$$\mathbf{\Sigma}\!\left(\mathbf{x}\right)\!=\!\!\begin{pmatrix}
K^c\left ( x_1,x_1 \right )   &\!\!\! \dots \!\!\!&K^c\left ( x_1,x_N \right ) \\
 \vdots & \!\!\!\ddots\!\!\! & \vdots\\
 K^c\left ( x_N,x_1 \right ) &\!\!\! \dots \!\!\!& K^c\left ( x_N,x_N \right )
\end{pmatrix}\!+\sigma^2_\epsilon \bm{I}_N\!\in\! \mathbb{R}^{N\times N},$$
and $\bm{\mu}^c\left ( \mathbf{x} \right ) =\left ( \mu^c\left ( x_1 \right ) , \mu^c\left ( x_2 \right ),
\ldots ,  \mu^c\left ( x_N \right )\right ) ^{\top}\in\mathbb{R}^{N}$. Note that $y^*\left(\mathbf{x}\right)-\bm{\mu}^c\left ( \mathbf{x}\right)\sim \mathcal{N}\left(\bm{0}, \mathbf{\Sigma}\left(\mathbf{x}\right)\right).$ We have 
\begin{equation*}
\label{eq.expression}
    \mathbb{E}\left [ f^c\left(x'\right)\mid \tilde{\mathcal{S}}^*_{\mathbf{x}} \right ] = \mu^c\left(x'\right)+\bm{\sigma}^c\left(\mathbf{x},x'\right)\bm{\xi},
\end{equation*}
where $\bm{\sigma}^c\left(\mathbf{x},x'\right) = \bm{K}^c\left(\mathbf{x},x'\right) ^{\top}\left(\bm{D}\left(\mathbf{x}\right)^{\top}\right)^{-1}$, $\bm{D}\left(\mathbf{x}\right)$ is the Cholesky factor of the matrix $\mathbf{\Sigma}\left(\mathbf{x}\right)$, and $\bm{\xi}\sim\mathcal{N}\left(\bm{0},\bm{I}_{N}\right)$. In this manner, when a sample of $\bm{\xi}$ and the decision variables $\mathbf{x}$ are fixed, $\mathbb{E}\left [ f^c\left(x'\right)\mid \tilde{\mathcal{S}}^*_{\mathbf{x}}\right ]$ is a deterministic function of $x'$ so can be optimized over $x'\in \mathcal{X}$. With a similar argument and $\xi\sim\mathcal{N}\left(0,1\right)$, the posterior mean function associated with the $n$-th local GP model can be expressed as 
\begin{equation*}
    \mathbb{E}\left[f\left(x'\right)\mid \tilde{\mathcal{S}}^*_{x_n}\right] = \tilde{\mu}_{n}\left(x'\right)+\tilde{\sigma}_{n}\left ( x_n,x' \right ) \xi,
\end{equation*}
where $\tilde{\sigma}_{n}\left(x_n,x'\right)=\tilde{K}_n\left(x_n,x'\right)/\sqrt{\tilde{K}_n\left(x_n,x_n\right)+\sigma^2_\epsilon}.$

Therefore, to approximate the Co-KG function in (\ref{eq.cokg}), we first generate $\left \{ \bm{\xi}_1,\bm{\xi}_2,\ldots,\bm{\xi}_M \right \}$, where $\bm{\xi}_m \stackrel{i.i.d.}{\sim} \mathcal{N}\left(\bm{0},\bm{I}_{N}\right)$. Then, we have the MC approximation 
\begin{equation}
\begin{aligned}
    \label{eq.ackg}
    \hat{\alpha }_{\text{Co-KG}}\left(\mathbf{x}\right) \doteq&\max_{\mathbf{z}}\frac{1}{M}\sum_{m=1}^{M} \left\{\left \{\mu^c\left(z_m\right)+\bm{\sigma}^c\left(\mathbf{x},z_m\right)\bm{\xi}_m \right \}\right.\\&\left.+\beta_t\sum_{n=1}^N \left\{\tilde{\mu}_{n}\left(z_{n,m}\right)+\tilde{\sigma}_{n}\left ( x_n,z_{n,m} \right ) \xi_{m,n}\right\}
\right\}.\end{aligned}
\end{equation}
Here $\mathbf{x} =\left ( x_1,x_2,\ldots, x_N \right ) $ is the set of decision variables; $\mathbf{z} =\left ( z_1,z_2,\ldots, z_M,z_{1,1},\ldots,z_{N,M}\right )$ are optimizers to maximize the sampled posterior mean functions; and $x_n,z_m,z_{n,m}\in \mathcal{X}_D$. In addition, $\xi_{m,n}$ denotes the $n$-th entry of $\bm{\xi}_m$. Since $x_n,z_m$ and $z_{n,m}$ are restricted to be within $\mathcal{X}_D$, we have the exact values of quantities in (\ref{eq.ackg}) in terms of $\tilde {\bm\mu}_n$'s, $\tilde {\bm K}_n$'s, $\bm{\mu}^c$, and $\bm {K}^c$. Thus, we can optimize $\mathbf{x}$ and $\mathbf{z}$ altogether to maximize the approximated Co-KG function; see also \cite{balandat2020botorch}. %Moreover, since the de

Let $\alpha^* = \max_{\mathbf{x}}\alpha_{\text{Co-KG}}\left ( \mathbf{x} \right )$ and $\hat{\alpha}^* = \max_{\mathbf{x}}\hat{\alpha}_{\text{Co-KG}}\left ( \mathbf{x} \right )$. Also, we let $\hat{\mathbf{x}}^* \in\arg\max_{\mathbf{x}}\hat{\alpha}_{\text{Co-KG}}\left ( \mathbf{x} \right )$ and $\bm{\mathcal{X}}^*$ be the set of optimizers of $\max_{\mathbf{x}}\alpha_{\text{Co-KG}}\left ( \mathbf{x} \right )$. Next we provide the consistency of the maximization based on this MC approximation. 
\begin{theorem}
\label{thm.2}
   If the kernel function is continuously differentiable, we have
    \begin{equation*}
        \lim_{M\rightarrow\infty}\hat{\alpha}^* \stackrel{a.s.}{=}\alpha^* \quad
        \text{and}\quad
        \lim_{M\rightarrow\infty}\operatorname{dist}\left(\hat{\mathbf{x}}^*, \bm{\mathcal{X}}^*\right)\stackrel{a.s.}{=} 0 ,
    \end{equation*}
    where $M$ denotes the number of samples generated to construct the MC approximation. Furthermore, $\forall \delta>0, \exists K<\infty, \beta_t>0$ such that 
    \begin{equation*}
        \mathbb{P}\left(\operatorname{dist}\left(\hat{\mathbf{x}}_N^*, \mathcal{X}^*\right)>\delta\right) \leqslant K e^{-\beta_t M}\quad \forall M \geqslant 1.
    \end{equation*}
\end{theorem}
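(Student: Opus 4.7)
The plan is to treat the statement as a classical sample-average approximation (SAA) consistency result. The first step is to rewrite the estimator (\ref{eq.ackg}) as a genuine empirical mean. Because the inner maximization over $\mathbf{z}$ separates across the Monte Carlo index $m$, one has $\hat{\alpha}_{\text{Co-KG}}(\mathbf{x}) = \frac{1}{M}\sum_{m=1}^{M} h(\mathbf{x}, \bm{\xi}_m)$, where
$$h(\mathbf{x},\bm{\xi}) = \max_{z\in\mathcal{X}_D}\bigl\{\mu^c(z) + \bm{\sigma}^c(\mathbf{x},z)\bm{\xi}\bigr\} + \beta_t \sum_{n=1}^{N}\max_{z\in\mathcal{X}_D}\bigl\{\tilde{\mu}_n(z) + \tilde{\sigma}_n(x_n,z)\xi_n\bigr\},$$
and $\mathbb{E}[h(\mathbf{x},\bm{\xi})] = \alpha_{\text{Co-KG}}(\mathbf{x})$ by construction. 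The theorem then reduces to proving uniform strong consistency of this sample mean and an exponential deviation bound on the compact parameter space $\mathcal{X}^N$.

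For the two almost-sure statements, I would establish a uniform SLLN. Since $\mathcal{X}$ is compact (Assumption \ref{assumption1}) and the kernel is continuously differentiable, both $\bm{\sigma}^c(\mathbf{x},z)$ and $\tilde{\sigma}_n(x_n,z)$ are Lipschitz in $\mathbf{x}$ on compact sets, and taking the maximum over $z\in\mathcal{X}_D$ preserves this Lipschitz property. Thus $h(\mathbf{x},\bm{\xi})$ is Lipschitz in $\mathbf{x}$ with a random constant of order $C(1+\|\bm{\xi}\|)$ whose expectation is finite, and a standard Jennrich/uniform-SLLN argument yields $\sup_{\mathbf{x}\in\mathcal{X}^N}|\hat{\alpha}_{\text{Co-KG}}(\mathbf{x})-\alpha_{\text{Co-KG}}(\mathbf{x})|\stackrel{a.s.}{\to} 0$. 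The convergence $\hat{\alpha}^*\to\alpha^*$ follows from $|\hat{\alpha}^*-\alpha^*|\le\sup_{\mathbf{x}}|\hat{\alpha}_{\text{Co-KG}}-\alpha_{\text{Co-KG}}|$. For the argmax claim, the standard Berge-style argument works: by compactness and continuity of $\alpha_{\text{Co-KG}}$, the growth function $d(\delta):=\inf_{\mathrm{dist}(\mathbf{x},\bm{\mathcal{X}}^*)\ge\delta}\bigl(\alpha^*-\alpha_{\text{Co-KG}}(\mathbf{x})\bigr)$ is strictly positive for every $\delta>0$, and the chain $\alpha^*-d(\delta)\ge\alpha_{\text{Co-KG}}(\hat{\mathbf{x}}^*)\ge\hat{\alpha}^*-\|\hat{\alpha}_{\text{Co-KG}}-\alpha_{\text{Co-KG}}\|_\infty\ge\alpha^*-2\|\hat{\alpha}_{\text{Co-KG}}-\alpha_{\text{Co-KG}}\|_\infty$ forces $\|\hat{\alpha}_{\text{Co-KG}}-\alpha_{\text{Co-KG}}\|_\infty\ge d(\delta)/2$ whenever the distance exceeds $\delta$.

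For the exponential tail bound, I would exploit Gaussian concentration rather than Hoeffding directly. The mapping $\bm{\xi}\mapsto h(\mathbf{x},\bm{\xi})$ is Lipschitz in $\bm{\xi}$ because it is a sum of maxima of affine functions of $\bm{\xi}$ whose slope vectors $\bm{\sigma}^c(\mathbf{x},\,\cdot\,)$ and $\tilde{\sigma}_n(x_n,\,\cdot\,)$ are uniformly bounded on $\mathcal{X}^N\times\mathcal{X}_D$. By Gaussian Lipschitz concentration, $h(\mathbf{x},\bm{\xi})-\alpha_{\text{Co-KG}}(\mathbf{x})$ is sub-Gaussian with a constant independent of $\mathbf{x}$, giving the pointwise bound $\mathbb{P}(|\hat{\alpha}_{\text{Co-KG}}(\mathbf{x})-\alpha_{\text{Co-KG}}(\mathbf{x})|>\epsilon)\le 2e^{-c M\epsilon^2}$. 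I would then cover $\mathcal{X}^N$ by an $\eta$-net of cardinality $O(\eta^{-dN})$ and use the Lipschitz property in $\mathbf{x}$ to upgrade the pointwise bound to $\mathbb{P}(\sup_{\mathbf{x}}|\hat{\alpha}_{\text{Co-KG}}-\alpha_{\text{Co-KG}}|>\epsilon)\le K_1 e^{-c_1 M\epsilon^2}$. Combining with the growth argument above yields $\mathbb{P}(\mathrm{dist}(\hat{\mathbf{x}}^*,\bm{\mathcal{X}}^*)>\delta)\le K e^{-\beta M}$ with $\beta = c_1 d(\delta)^2/4$.

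The hardest part will be making the Lipschitz/sub-Gaussian constants truly uniform in $\mathbf{x}$ through the Cholesky factor $\bm{D}(\mathbf{x})$ appearing in $\bm{\sigma}^c(\mathbf{x},z)=\bm{K}^c(\mathbf{x},z)^{\top}(\bm{D}(\mathbf{x})^{\top})^{-1}$. This requires a uniform lower bound on the eigenvalues of $\mathbf{\Sigma}(\mathbf{x})$ over $\mathbf{x}\in\mathcal{X}^N$; fortunately such a bound is available because $\mathbf{\Sigma}(\mathbf{x})\succeq\sigma_\epsilon^2 \bm{I}_N$ and the entries of $\bm{K}^c$ are continuous in $\mathbf{x}$ on the compact set $\mathcal{X}^N$. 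With this uniform spectral control in hand, the sub-Gaussian constant for $h$ and the Lipschitz constant in $\mathbf{x}$ are both uniform, and the remainder of the argument is standard SAA machinery.
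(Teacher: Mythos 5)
Your proposal is essentially the paper's own route: the paper proves Theorem~\ref{thm.2} only by citation, saying it follows from a simple adaptation of the Monte Carlo acquisition consistency theorem of \cite{balandat2020botorch}, which itself rests on the sample average approximation theory of \cite{kleywegt2002sample}, and your argument---decoupling the inner maximizers across Monte Carlo samples to write $\hat{\alpha}_{\text{Co-KG}}$ as an i.i.d.\ empirical mean, a uniform LLN over the compact domain via Lipschitz continuity in $\mathbf{x}$, a growth-condition (Berge-style) argument for the argmax, and a covering-plus-sub-Gaussian concentration step for the exponential tail---is exactly that SAA machinery written out. So you are correct and on the same path; you merely supply the details the paper defers to its references, including the uniform spectral control $\mathbf{\Sigma}(\mathbf{x})\succeq\sigma^2_\epsilon\bm{I}_N$ needed to make the constants uniform through the Cholesky factor.
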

{\begin{proof}
    We omit the details due to lack of space, however it follows from a simple adaptation of Theorem \ref{thm.2} from \cite{balandat2020botorch}, which is itself grounded in the theoretical foundations of the sample average approximation method \cite{kleywegt2002sample}.
\end{proof}}
{We note that the condition that the kernel function is continuously differentiable guarantees that the objective function (the sample path of the GP) is continuously differentiable as well. As a result, the sample path of the central GP is also continuously differentiable, supported by the properties of the Wasserstein distance and the Fréchet mean \cite{rao1987differential,panaretos2019statistical}}.

\section{Experiments}

We first compare the performances of different collaborative acquisition functions in order to justify the use of the knowledge gradient function. Next, we explore the algorithm performance on the hyperparameter selection. Lastly, we compare our approach (Algorithm~\ref{alg:1}) with several baseline approaches, as well as a parallel BO algorithm without data privacy concerns.

Across all sets of experiments, we fix the number of agents at $N=4$. We also normalize the feasible set to the box $\mathcal{X}=[0, 1]^2$ and discretize the feasible set using a $20 \times 20$ uniform meshgrid. Additional experiments on the effects of the number of agents and different discretization strategies are also included in the supplementary materials. Our experiments were conducted with Botorch \cite{balandat2020botorch} and Python 3.9 on a computer equipped with two AMD Ryzen Threadripper 3970X 32-Core Processors, 128 GB memory, and a Nvidia GeForce RTX A6000 GPU with 48GB of RAM.

\begin{figure}[ht!]
    \centering
    \begin{minipage}[b]{0.24\textwidth}
        \centering
        \includegraphics[width=\linewidth]{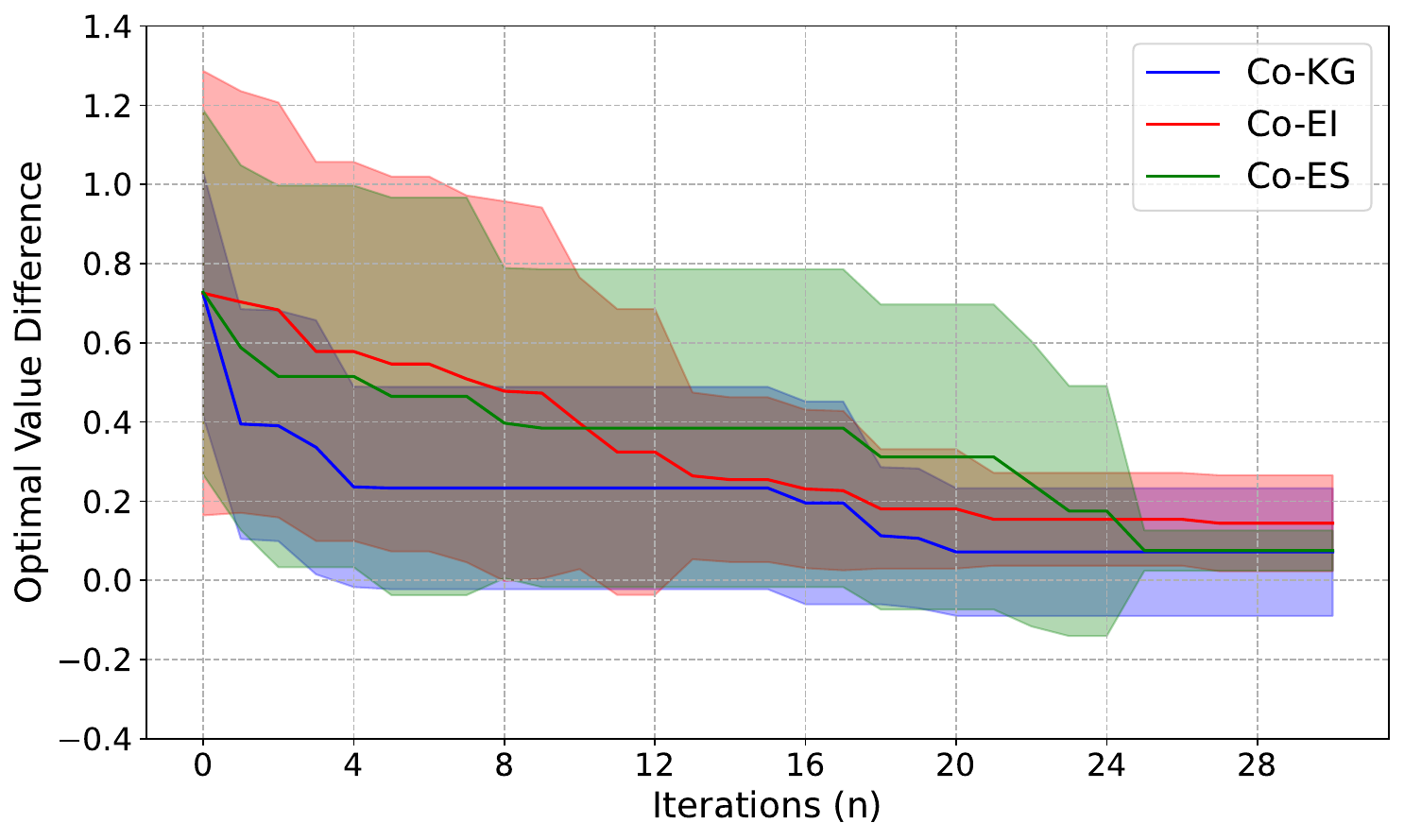}
        \caption{ Optimal value differences with different acquisition functions on the black-box objective function $f_1(x)$.}
        \label{fig:sub1}
    \end{minipage}
    \hfill
    \begin{minipage}[b]{0.24\textwidth}
        \centering
        \includegraphics[width=\linewidth]{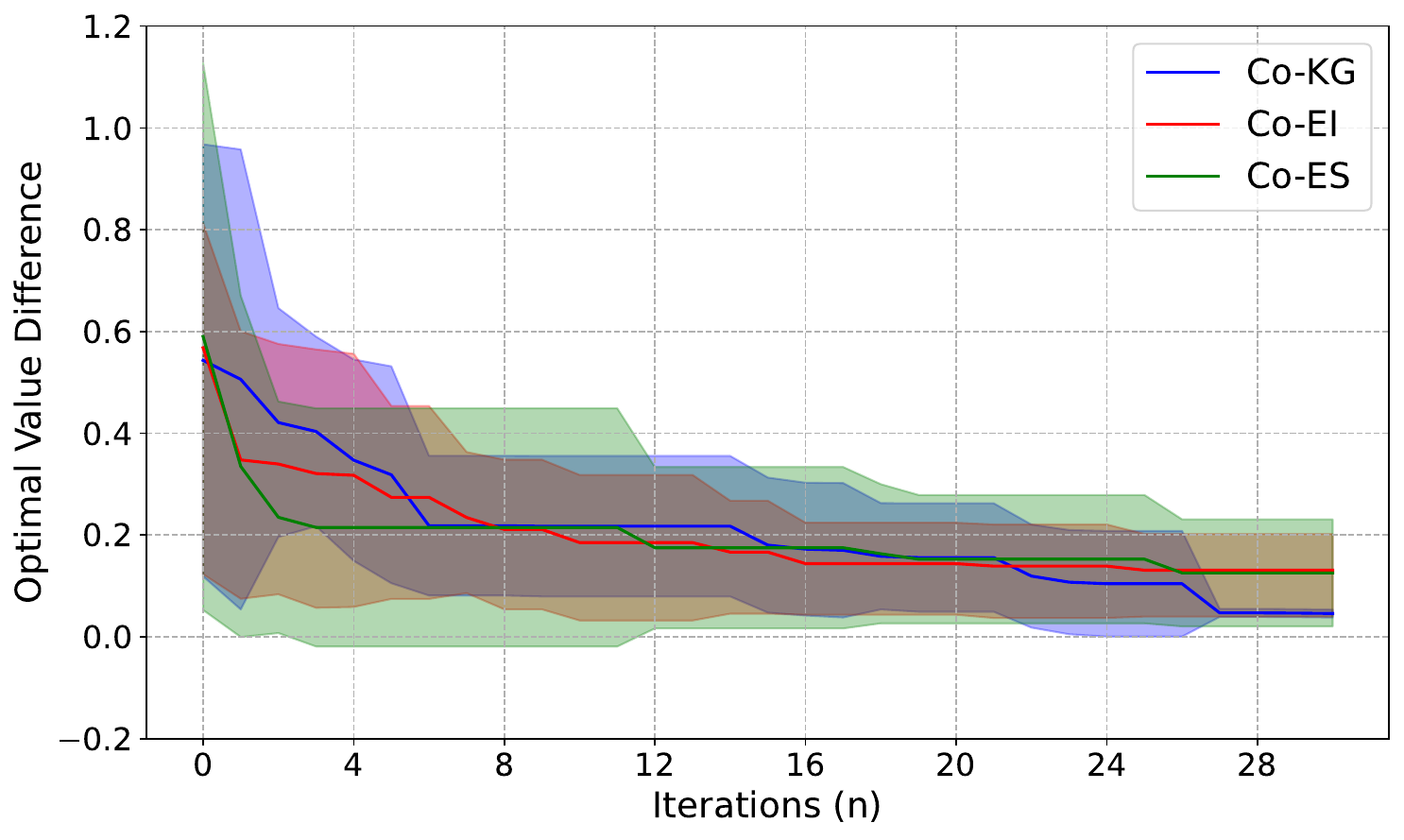}
        \caption{ Optimal value differences with different acquisition functions on the black-box objective function $f_2(x)$.}
        \label{fig:sub2}
    \end{minipage}
    
    %\caption{Overall caption for both images.}
    \label{fig:func}
\end{figure}

\subsection{Collaborative Acquisition Function Comparison}
\label{sec.exp1}
We conduct experiments to compare different collaborative acquisition functions within the general form of (\ref{eq.a}). Specifically, we consider (i) the collaborative knowledge gradient (Co-KG) function as in (\ref{eq.cokg});
    (ii) the collaborative expected improvement (Co-EI) function with $\alpha^c\left(\mathbf{x}\right)$ selected as the parallel expected improvement function and $\alpha_n(x)$ selected as the expected improvement function \cite{wang2020parallel}; and
    (iii) the collaborative entropy search (Co-ES) function with $\alpha^c\left(\mathbf{x}\right)$ selected as the parallel expected improvement function and $\alpha_n(x)$ selected as the expected improvement function \cite{shah2015parallel}.

Regarding the hyperparameter in the collaborative functions, we set $\beta_t = \log\left ( 2t+1 \right ) $. We compare the performance of these algorithms using the two functions: a function with quadratic terms and trigonometric terms
\begin{equation*}
    \label{eq.f1} f_1(x)  = x_1^2 + x_2^2 + \sin(2 \pi x_1) + \cos(2 \pi x_2),
\end{equation*}
and the Rosenbrock function %\cite{picheny2013benchmark}
\begin{equation*}
    \label{eq.f2} f_2(x) = (1 - x_1)^2 + 100(x_2 - x_1^2)^2.
\end{equation*}
 When collecting the observations, we add Gaussian noise as $y = f(x)+\epsilon$, where $\epsilon\sim \mathcal{N}(0,\sigma^2_\epsilon=0.02)$.  Regarding the variance of the noise, $\sigma_\epsilon^2$, each agent estimates it from observations collected in the warm-up stage using maximum likelihood estimation, denoted by $\hat{\sigma}^2_n$. Then the variance is $\left ( \sum_{n=1}^N \hat{\sigma}^2_n \right ) /N  $ fixed by the server. The experimental results are in Figure \ref{fig:sub1} and Figure \ref{fig:sub2}, where we report 
the \textit{optimal value difference} defined by 
\begin{equation*}
    \arg\max_{x\in\mathcal{X}}f(x) - f\left(\hat{x}^*\right).
\end{equation*}
Here $\hat{x}^*$ is the maximizer selected by the server after the iterations end; see Algorithm \ref{alg:1}. The experimental results presented are mean performances based on 10 repetitions, with standard deviations represented by a shadow around the mean-value line. Each repetition of the optimization procedure includes 30 iterations, and each agent has 5 observations associated with randomly selected decision variables before the start of the iterations. From the experimental results in Figure \ref{fig:sub1}, we observe that Co-KG consistently outperforms Co-EI and Co-ES in terms of achieving lower optimal value differences across iterations. In contrast, in Figure \ref{fig:sub2}, Co-KG does not perform as well as Co-EI or Co-ES in the initial iterations but surpasses them as the iterations increase. The reason is that the Co-KG function is overconfident in the set of experiments associated with $f_2(x)$, relying too heavily on the conditional mean function. When there is insufficient data, the conditional mean is not accurate enough in approximating the objective function. However, as the iterations increase and more data becomes available, Co-KG, which relies on the conditional mean, outperforms the other two approaches due to a more accurate approximation of the objective function. This advantage is also evident in the results shown in Figure \ref{fig:sub1}. Since $f_1(x)$ is relatively simple, the conditional mean serves as a satisfactory approximation even when there is insufficient data at the beginning of the iterations, leading to Co-KG's preferable performance. Considering both experimental results, we focus on the acquisition function Co-KG in the following experiments.

\subsection{Hyperparameter Analysis}

We now explore the dependence of Co-KG on the hyperparameter $\beta_{t}$. We numerically evaluate how a time-varying hyperparameter $\beta_t$ will affect the performance of Co-KG, where $t$ denotes the iteration of the procedure. Specifically, we consider a decreasing sequence of hyperparameters $\beta_{t} = e^{-t/2}$ and an increasing sequence $\beta_{t} = \log(2t+1)$. We also include $\beta_t = 1$ for comparison. We follow the setting of section~\ref{sec.exp1} to further investigate the impact of hyperparameter.

\begin{figure}[ht!]
    \centering
    \begin{minipage}[b]{0.24\textwidth}
        \centering
        \includegraphics[width=\linewidth]{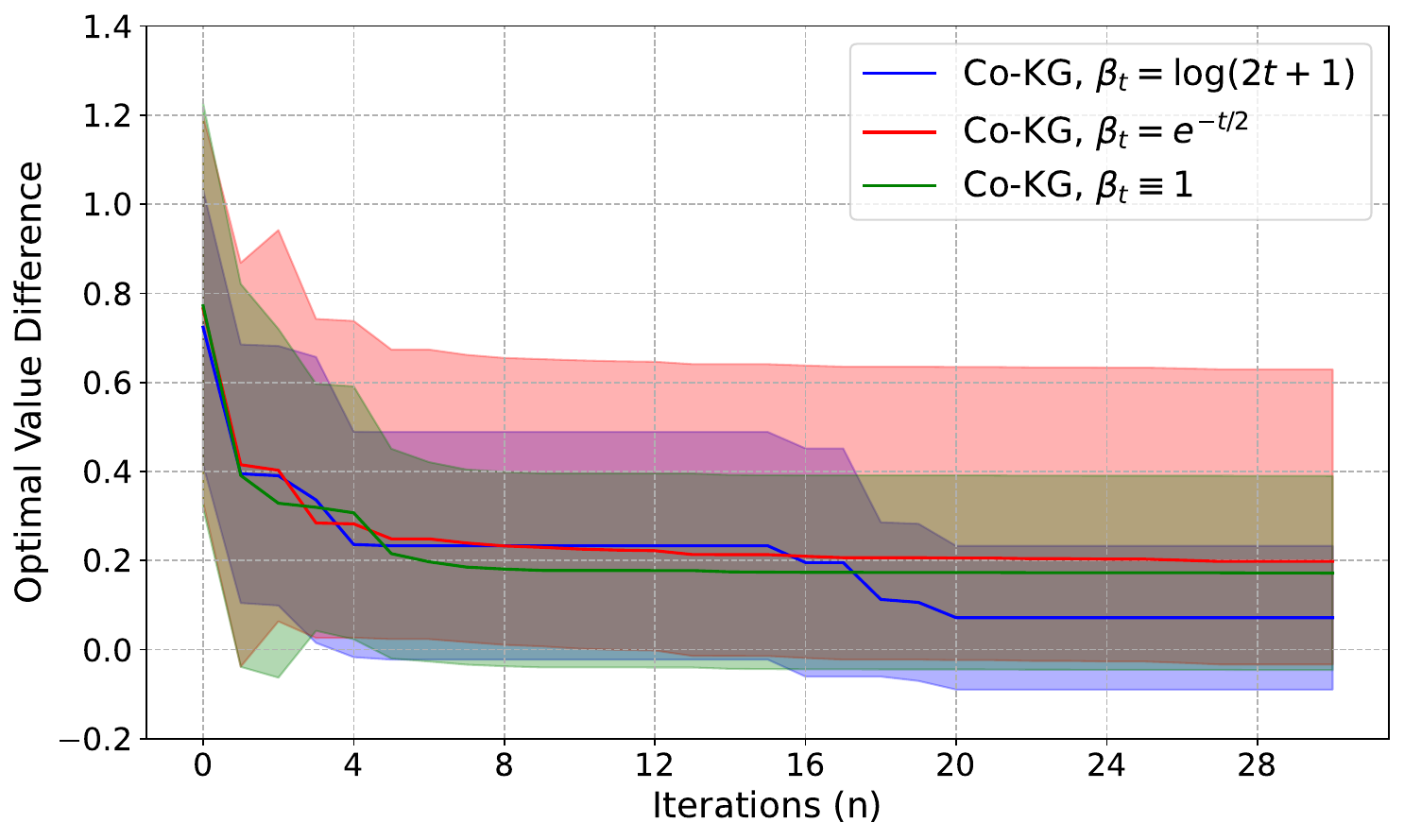}
        \caption{Optimal value difference in iterations with different selections of $\beta_t$ on $f_{1}(x)$.}
        \label{fig:sub11}
    \end{minipage}
    \hfill
    \begin{minipage}[b]{0.24\textwidth}
        \centering
        \includegraphics[width=\linewidth]{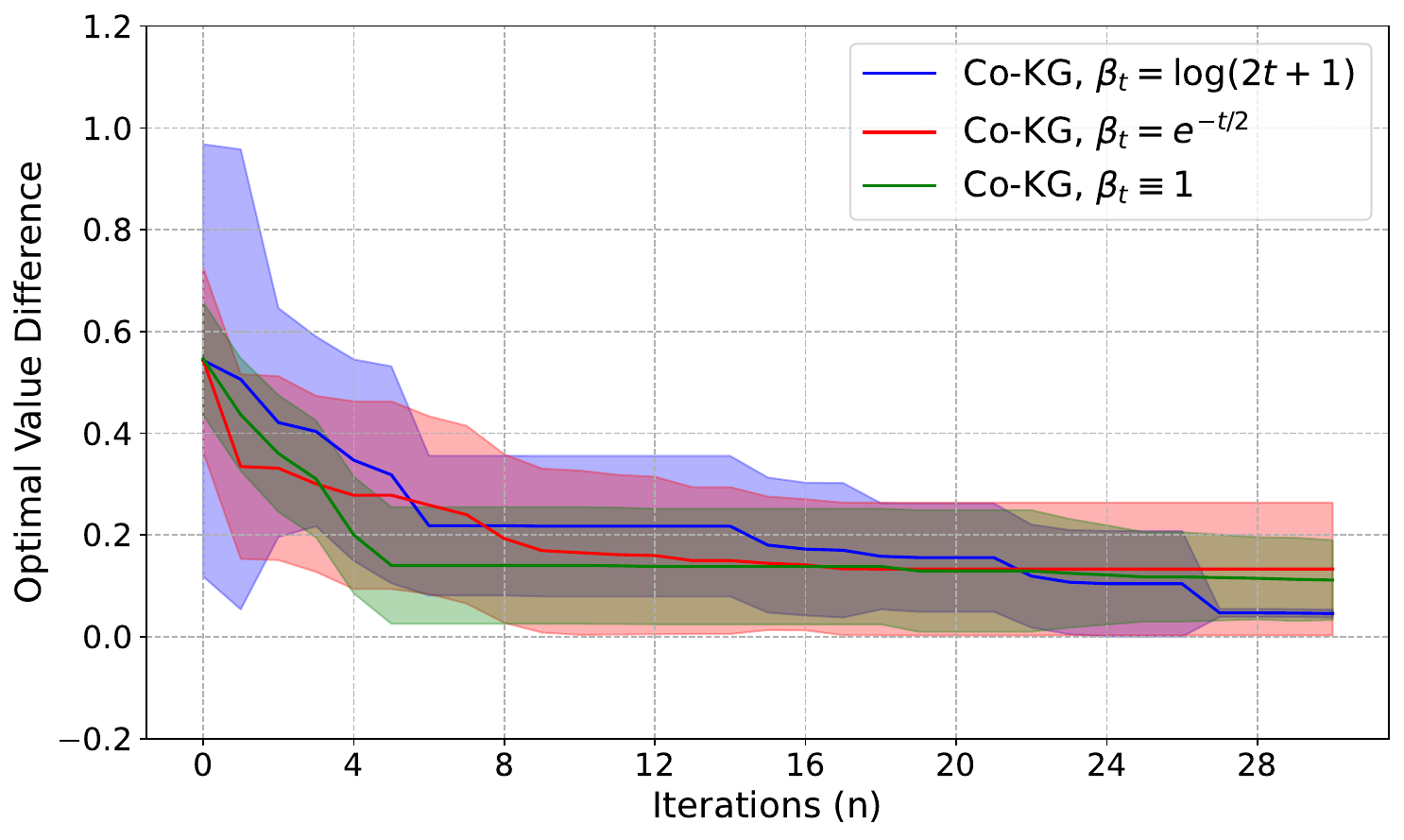}
        \caption{Optimal value difference in iterations with different selections of $\beta_t$ on $f_{2}(x)$.}
        \label{fig:sub21}
    \end{minipage}
    
    %\caption{Overall caption for both images.}
    \label{fig:hyper}
\end{figure}

The experimental results are included in {Figure~\ref{fig:hyper}}. We see that the performance of Co-KG does not significantly depend on the hyperparameter $\beta_t$ in the initial iterations. In comparison, as the iterations progress, an increasing sequence of $\beta_t$ outperforms the other two selections in both sets of experiments. The reason is two-fold:  In early iterations, there is insufficient data, so the central model collects information from local models to help construct a more accurate approximation of the objective function, which helps the overall optimization procedure \cite{yue2023collaborative}. On the other hand, constructing a central GP model requires approximation, which introduces additional bias and uncertainty. As the iteration increases and more data is collected, each local model becomes more reliable, reducing the reliance on the central model. Additionally, from the perspective of the surrogate models, $\beta_t$ also addresses the exploration-exploitation trade-off. A higher $\beta_t$ favors the exploration of local models to gather diverse information about the objective function. Increasing  $\beta_t$ as the iterations progress, to manage the exploration-exploitation trade-off, is also supported by classical BO literature \cite{srinivas2010gaussian}.

% \end{figure}
\begin{figure}[ht!]
    \centering
    \begin{minipage}[b]{0.24\textwidth}
        \centering
        \includegraphics[width=\linewidth]{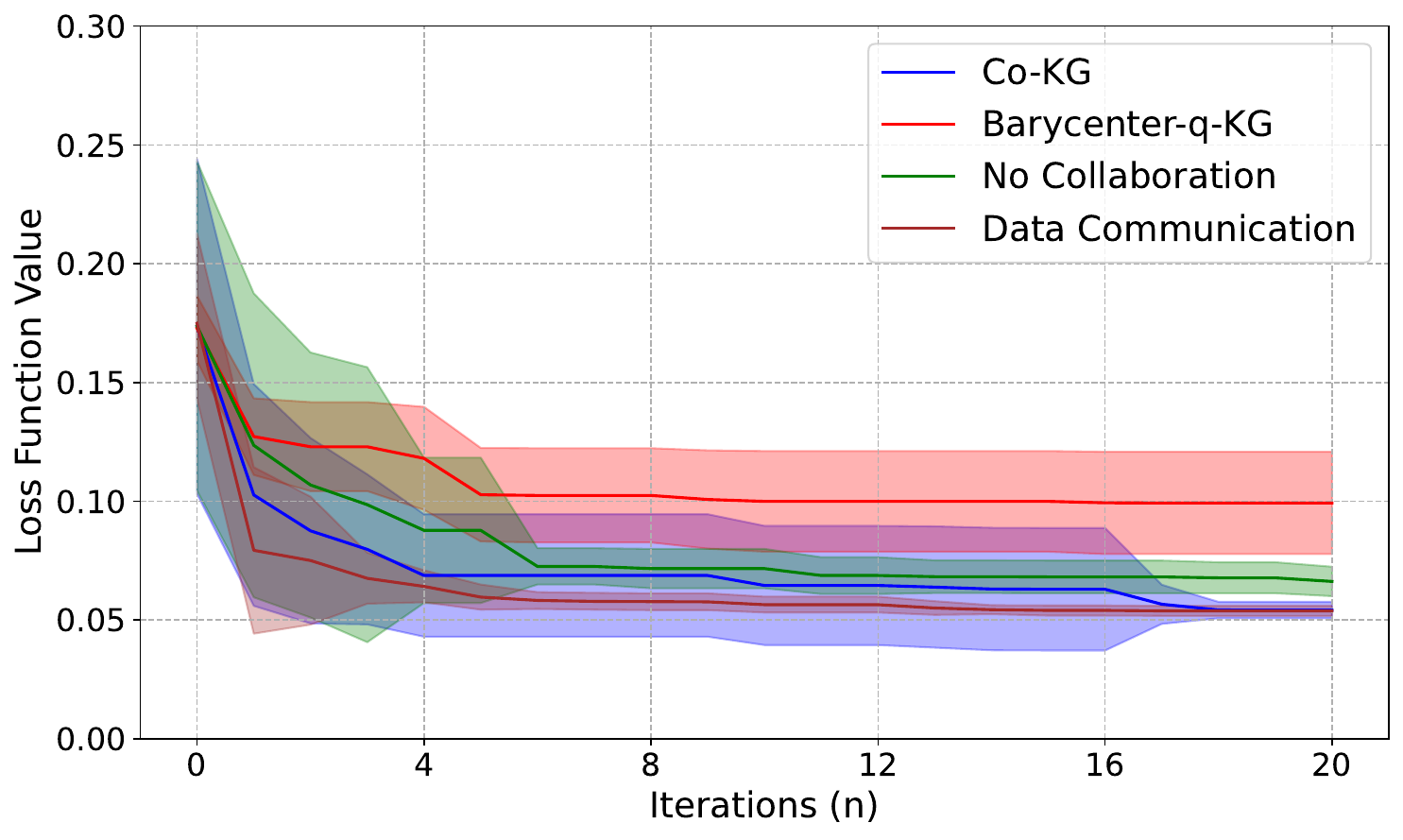}
        \caption{Loss function values in iterations with compared BO approaches on Breast Cancer Dataset.}
        \label{fig:sub3}
    \end{minipage}
    \hfill
    \begin{minipage}[b]{0.24\textwidth}
        \centering
        \includegraphics[width=\linewidth]{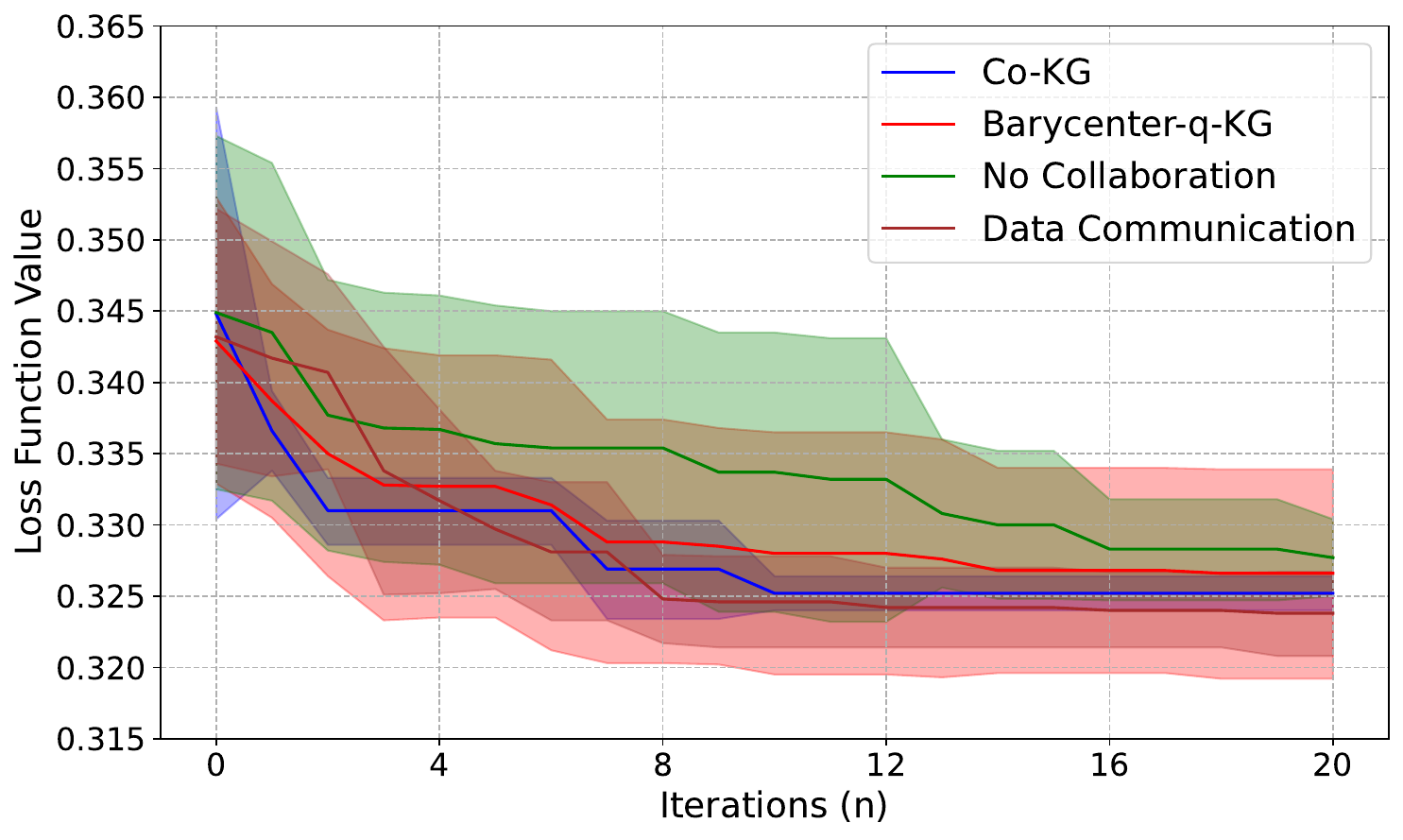}
        \caption{Loss function values in iterations with compared BO approaches on California Housing Dataset.}
        \label{fig:sub4}
    \end{minipage}
    %\caption{Overall caption for both images.}
    \label{fig:twocolfig}
\end{figure}

\subsection{Collaborative Framework Comparison}
We compare our framework using Co-KG with several baseline approaches including (i) the BO approach with the parallel Knowledge Gradient function focusing on the central model constructed by the Wasserstein Barycenter (Barycenter-$q$-KG); (ii) the BO approach with the Knowledge Gradient function implemented by each agent without collaboration (No Collaboration); and (iii) the BO approach with the parallel Knowledge Gradient function with permission to Data Communication (Data Communication).

Regarding the black-box optimization problem, we consider the task of hyperparameter tuning for learning 3-layer neural networks from data. Specifically, the decision variable is the learning rate and the hidden layer node size of the neural networks. The unknown objective function is the loss function we would minimize. The neural networks are learned from two datasets. The first dataset is related to the breast cancer\footnote{\url{https://archive.ics.uci.edu/dataset/17/breast+cancer+wisconsin+diagnostic}}, where the neural network is learned to predict the breast cancer classification label given predictive attributes. The second dataset is describes  California housing prices\footnote{\url{https://www.dcc.fc.up.pt/~ltorgo/Regression/cal\_housing.html}}, where the neural network is learned to predict median value of houses in different districts given demographic attributes. We do not impose noise on the observations in the real dataset.

We record the loss function values of training neural networks in Figure \ref{fig:sub3} and Figure \ref{fig:sub4}. The results provide the following insights. First, the Data Communication approach achieves the best performance with the smallest loss function values on both datasets, since this approach has permission to share data and therefore exploits the data most effectively to construct the GP model. Second, our proposed Co-KG approach achieves performance comparable to the Data Communication approach and outperforms the other two compared approaches on both datasets. This indicates the effectiveness of 1) collaboration among agents (as seen in the comparison with No Collaboration) and 2) considering the differentiation between agents (as seen in the comparison with Barycenter-$q$-KG). Lastly, when the unknown objective function is relatively simple to optimize (as in the Breast Cancer Dataset), distributed methods (i.e., no collaboration) can already achieve acceptable performance. In these scenarios, inefficient collaboration (Barycenter-$q$-KG) might decrease the performance of BO approaches.

\section{Conclusion}
We consider a collaborative framework for Bayesian optimization (BO) with data privacy, where multiple agents collect data to optimize an identical black-box objective function, without sharing their data. In our framework, agents share the Gaussian process (GP) models constructed with their own data, and a server builds a central GP model from the shared local GP models using the concept of the Wasserstein barycenter. We propose a general acquisition function that takes both the central model and local models into consideration and selects decision variables for agents in each iteration. We specifically focus on the knowledge gradient algorithm and propose a collaborative knowledge gradient (Co-KG) function. We establish the consistency of the BO approach based on Co-KG. To approximate Co-KG, we employ a Monte Carlo method and prove the consistency of this approximation as well. Additionally, we conduct numerical experiments to demonstrate that Co-KG outperforms other collaborative acquisition functions within our framework and achieves superior performance compared to other collaborative frameworks. We also show that our framework with Co-KG can achieve performance comparable to approaches that do not have data privacy concerns.

\section{Acknoweldgements}
James Anderson acknowledges funding from  NSF grants ECCS 2144634 and 2231350 and the Columbia Data Science Institute.

\bibliographystyle{IEEEtran}
\bibliography{main.bib}

\onecolumn

\section{Proof of Theoretical Results}
We prove Theorem 2 in the main text here. The result requires Assumption 1 as stated in the main text and repeated below: %\ja{[Please relabel this so that is Assumption 1]}
\setcounter{assumption}{0}
\begin{assumption}
    \begin{enumerate}\item[]
        \item The feasible set $\mathcal{X}$ is a compact set.
        \item Regarding the kernel function $K\left(x,x'\right)$, there exists a constant $\tau>0$ and a continuous function $\rho:\mathbb{R}^d \mapsto \mathbb{R}_{+}$ such that $K\left(x,x'\right) = \tau^2 \rho\left(x-x'\right)$. Moreover,
        \begin{enumerate}
            \item $\rho(|\delta|)=\rho(\delta)$, where $|\,\cdot\,|$ means taking the absolute value component-wise;
            \item $\rho(\delta)$ is decreasing in $\delta$ component-wise for $\delta \geqslant \mathbf{0}$;
\item $\rho(\mathbf{0})=1, \rho(\delta) \rightarrow 0$ as $\|\delta\| \rightarrow \infty$, where $\|\cdot\|$ denotes the Euclidean norm;
\item there exist some $0<C<\infty$ and $\varepsilon, u>$ 0 such that
$$
1-\rho(\delta) \leqslant \frac{C}{|\log (\|\delta\|)|^{1+\varepsilon}},
$$
for all $\delta$ such that $\|\delta\|<u$.
           
        \end{enumerate}
    \end{enumerate}
\end{assumption}
We first focus on the local GP model. Here we hide the index $n$ of each local GP. Recall that the objective function to optimize $f(x)$ is a GP model with the prior kernel function $K\left(x,x'\right)$. Specifically, we have the following proposition on the convergence of posterior kernel functions of $f\left(x\right)$.
\begin{proposition}[Proposition 1 of \cite{ding2022knowledge}]
    If the kernel function $K\left(x,x'\right)$ satisfies the conditions in Assumption \ref{assumption1}, then 
    \begin{equation*}
        \lim_{t\rightarrow\infty}\tilde{K}\left(x,x'\right)\stackrel{a.s.}{\longrightarrow} K^{\infty
        }\left(x,x'\right),
    \end{equation*}
    and the convergence is uniform. Here $\tilde{K}\left(x,x'\right)$ is the posterior kernel function after collecting $t$ observations, as defined in the main text, and $K^{\infty}\left(x,x'\right)$ is a function that does not depend on $t$.
\end{proposition}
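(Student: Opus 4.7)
The plan is to reduce the claim to the martingale structure of Gaussian-process posteriors and then upgrade pointwise convergence to uniform convergence via equicontinuity on the compact domain. Let $\mathcal{F}_t$ denote the $\sigma$-algebra generated by the designs and observations through iteration $t$. For each fixed $x \in \mathcal{X}$, the posterior mean $\tilde{\mu}_t(x) = \mathbb{E}[f(x)\mid\mathcal{F}_t]$ is, by the tower property, an $L^2$-bounded martingale: $\mathbb{E}[\tilde{\mu}_t(x)^2] \le \mathbb{E}[f(x)^2] = \tau^2$. Doob's $L^2$-martingale convergence theorem then supplies a limit $\mu_\infty(x)$ to which $\tilde{\mu}_t(x)$ converges almost surely and in $L^2$. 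Because the conditional covariance of a GP is a deterministic functional of the query points, the law of total covariance applied to the pair $(f(x), f(x'))$ yields $K(x,x') = \tilde{K}_t(x,x') + \mathrm{Cov}\bigl(\tilde{\mu}_t(x),\tilde{\mu}_t(x')\bigr)$, and the $L^2$-convergence of the martingale transfers to convergence of the right-hand covariance. Hence $\tilde{K}_t(x,x')$ converges pointwise almost surely to $K^\infty(x,x') := K(x,x') - \mathrm{Cov}\bigl(\mu_\infty(x),\mu_\infty(x')\bigr)$, which by construction does not depend on $t$.

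To lift pointwise convergence to uniform convergence on the compact set $\mathcal{X}\times\mathcal{X}$, I would employ an Arzel\`a--Ascoli argument: pointwise convergence plus equicontinuity of $\{\tilde{K}_t\}_{t\ge 1}$ on a compact set forces uniform convergence. Equicontinuity can be extracted from the splitting $|\tilde{K}_t(x,x')-\tilde{K}_t(y,y')| \le |K(x,x')-K(y,y')| + |\mathrm{Cov}(\tilde{\mu}_t(x),\tilde{\mu}_t(x')) - \mathrm{Cov}(\tilde{\mu}_t(y),\tilde{\mu}_t(y'))|$. The first term is controlled by the modulus of continuity of $\rho$, which by item (d) of Assumption~\ref{assumption1} admits an explicit (albeit slow) logarithmic bound. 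The second term is governed by the smoothness of the conditional-mean map, which inherits a modulus from the prior kernel through the RKHS projection structure; the point is that this modulus can be made independent of $t$.

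The main obstacle is exactly the uniform-in-$t$ equicontinuity. The closed-form expression $\tilde{K}_t(x,x') = K(x,x') - \bm{K}_t(x)^\top(\tilde{\bm{K}}_t+\sigma_\epsilon^2\bm{I}_t)^{-1}\bm{K}_t(x')$ contains an inverse matrix whose spectral norm can degrade as $t$ grows, so a direct term-by-term estimate will not suffice; one must exploit the cancellation between the unconditional and subtracted pieces. Assumption~\ref{assumption1}(d) is tailored precisely to drive this cancellation through, and the required matrix-analytic bookkeeping is exactly the content of Proposition~1 of Ding~et~al.~\cite{ding2022knowledge}, which is the result being restated; I would invoke it directly rather than reproduce the calculation.
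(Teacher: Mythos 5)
The paper itself offers no proof of this proposition: it is imported verbatim as Proposition~1 of \cite{ding2022knowledge}, so your closing move --- invoking that result rather than reproducing the matrix-analytic argument --- is in effect exactly what the paper does, and is acceptable. But note that this makes your preceding sketch non-load-bearing, and the sketch contains a genuine misstep. Under the adaptive sampling of Algorithm~1 the design points are random ($\mathcal{F}_{t-1}$-measurable), so $\tilde{K}_t\left(x,x'\right)$ is itself a random, $\mathcal{F}_t$-measurable quantity; the law of total covariance therefore gives $K\left(x,x'\right)=\mathbb{E}\left[\tilde{K}_t\left(x,x'\right)\right]+\operatorname{Cov}\left(\tilde{\mu}_t(x),\tilde{\mu}_t\left(x'\right)\right)$, not the pointwise identity $K\left(x,x'\right)=\tilde{K}_t\left(x,x'\right)+\operatorname{Cov}\left(\tilde{\mu}_t(x),\tilde{\mu}_t\left(x'\right)\right)$ that you rely on. Consequently, $L^2$-convergence of the posterior-mean martingale only yields convergence of $\mathbb{E}\left[\tilde{K}_t\left(x,x'\right)\right]$, which falls short of the almost-sure pointwise convergence you need before any Arzel\`a--Ascoli step. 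The standard repair is to observe that $\operatorname{Var}\left(f(x)\mid\mathcal{F}_t\right)$ is a nonnegative supermartingale (law of total variance conditioned on $\mathcal{F}_t$), hence converges almost surely, and to recover the covariance by polarization; this is the route of \cite{10.3150/18-BEJ1074}, the same source the paper cites for convergence of the posterior mean.

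Separately, your final step --- closing the uniform-in-$t$ equicontinuity gap by citing ``Proposition~1 of Ding et al.'' --- is circular as a proof step, since that proposition \emph{is} the statement under consideration. Either cite the result outright (as the paper does) or actually carry out the uniform-convergence estimate under Assumption~\ref{assumption1}; the hybrid of a partial sketch plus a citation of the target establishes nothing beyond the citation itself.
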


%Furthermore, regarding the post

Next we provide a corollary regarding the posterior variance 
\begin{equation*}
    \operatorname{Var}\left [ f\left(x\right)\mid\tilde{\mathcal{S}} \right ] =\tilde{K}\left ( x,x \right ).
\end{equation*}
We note that, under Assumption \ref{assumption1}, there would be an accumulative point $x^{acc}\in\mathcal{X}$ for each local GP model. We here provide an asymptotic upper bound of $\operatorname{Var}\left [ f\left(x\right)\mid\tilde{\mathcal{S} }\right ]$ within an area centered at this accumulative point.

\begin{lemma}[Lemma 6 of \cite{ding2022knowledge}]
\label{lemma1}
    Under Assumption \ref{assumption1}, $\forall\epsilon>0$, we have 
    \begin{equation*}
        \limsup _{t \rightarrow \infty} \max _{x\in \mathcal{B}\left(x^{acc }, \epsilon\right)} \operatorname{Var}\left[f(x)\mid\tilde{\mathcal{S}}\right] \leqslant\tau^2\left[1-\rho^2(2 \epsilon \mathbf{1})\right],
    \end{equation*}
    where $\mathbf{1}$ is the vector of all ones with size $d\times 1$, $\mathcal{B}\left(x^{acc }, \epsilon\right)$ is the ball centered at $x^{acc}$ with radius $\epsilon$.
\end{lemma}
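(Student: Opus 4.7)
The plan is to upper bound the posterior variance $\tilde{K}(x,x)$ at any $x\in\mathcal{B}(x^{acc},\epsilon)$ by the variance obtained from conditioning only on the subset of samples that happen to fall inside $\mathcal{B}(x^{acc},\epsilon)$, and then show that this restricted variance is asymptotically no larger than $\tau^2[1-\rho^2(2\epsilon\mathbf{1})]$. Along an almost sure sample path on which $x^{acc}$ is genuinely an accumulation point of the sampled decision variables, the number $m = m(t)$ of in-ball samples diverges as $t\to\infty$. Discarding the out-of-ball observations can only enlarge the posterior variance, so
\[
\tilde{K}(x,x)\ \leq\ \tau^{2}\ -\ \bm{k}_m(x)^{\top}\bigl(\bm{K}_m+\sigma_\epsilon^2 \bm{I}_m\bigr)^{-1}\bm{k}_m(x),
\]
where $\bm{K}_m$ and $\bm{k}_m(x)$ denote the kernel matrix and cross-kernel vector built from the in-ball samples alone.

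Next I would derive a clean lower bound on the quadratic form on the right. Applying the variational identity $u^{\top}A^{-1}u=\max_{v}\{2v^{\top}u-v^{\top}Av\}$ (valid for positive-definite $A$) with the one-parameter ansatz $v = (\alpha/m)\mathbf{1}$, and optimising over $\alpha\in\mathbb{R}$, yields
\[
\bm{k}_m(x)^{\top}\bigl(\bm{K}_m+\sigma_\epsilon^2 \bm{I}_m\bigr)^{-1}\bm{k}_m(x)\ \geq\ \frac{\bar{k}_m(x)^{2}}{\bar{K}_m+\sigma_\epsilon^2/m},
\]
where $\bar{k}_m(x)\doteq m^{-1}\sum_j K(x,x_{i_j})$ and $\bar{K}_m\doteq m^{-2}\mathbf{1}^{\top}\bm{K}_m\mathbf{1}$. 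Two pieces of Assumption~\ref{assumption1} now come into play. The pointwise bound $\rho\leq 1$ gives $\bar{K}_m\leq\tau^2$ directly. For $\bar{k}_m(x)$, both $x$ and every in-ball sample $x_{i_j}$ lie in $\mathcal{B}(x^{acc},\epsilon)$, so every component of $|x-x_{i_j}|$ is at most $2\epsilon$; the componentwise monotonicity of $\rho$ combined with the symmetry $\rho(|\delta|)=\rho(\delta)$ then yields $\rho(x-x_{i_j})\geq\rho(2\epsilon\mathbf{1})$, and therefore $\bar{k}_m(x)\geq \tau^2\rho(2\epsilon\mathbf{1})$. Sending $m\to\infty$ in the variational lower bound produces $\tilde{K}(x,x)\leq \tau^2[1-\rho^2(2\epsilon\mathbf{1})]$ in the limit.

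Upgrading this pointwise-in-$x$ conclusion to the $\max_{x\in\mathcal{B}(x^{acc},\epsilon)}$ statement is essentially for free: both estimates $\bar{k}_m(x)\geq\tau^2\rho(2\epsilon\mathbf{1})$ and $\bar{K}_m\leq\tau^2$ are uniform in $x$ on the ball, so the resulting upper bound on $\tilde{K}(x,x)$ contains no $x$-dependence at all. The main obstacle I anticipate is handling randomness rigorously. The accumulation-point property is a sample-path statement about the adaptively chosen queries, so $m(t)$ is a random integer with no a priori growth rate that can be quoted in advance. I would therefore isolate the almost sure event on which $x^{acc}$ is genuinely an accumulation point, apply the deterministic chain of inequalities above along the resulting random subsequence of in-ball samples, and then take $\limsup_{t\to\infty}$ to reach the stated conclusion.
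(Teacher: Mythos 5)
Your argument is correct, and it is worth noting that the paper itself gives no proof of this statement: it is imported verbatim as Lemma~6 of \cite{ding2022knowledge}, so you are effectively reconstructing the cited result from scratch rather than mirroring an in-paper derivation. Your route is sound: (i) the reduction to the in-ball samples is valid because the posterior variance formula is monotone under discarding observations (if you want to avoid any probabilistic interpretation with adaptively chosen design points, note that this monotonicity is itself an instance of your variational identity, taking $v$ supported only on the in-ball coordinates, so the whole proof is purely algebraic in the design); (ii) the one-parameter ansatz $v=(\alpha/m)\mathbf{1}$ gives the lower bound $\bar{k}_m(x)^2/(\bar{K}_m+\sigma_\epsilon^2/m)$, and the uniform estimates $\bar{k}_m(x)\geqslant \tau^2\rho(2\epsilon\mathbf{1})$ (using $\|x-x_{i_j}\|\leqslant 2\epsilon$, the symmetry $\rho(|\delta|)=\rho(\delta)$, and componentwise monotonicity) and $\bar{K}_m\leqslant\tau^2$ (from $\rho\leqslant\rho(\mathbf{0})=1$) are exactly what is needed; (iii) since $x^{acc}$ is an accumulation point of the queried locations, $m(t)\rightarrow\infty$ along the relevant sample path, the effective noise $\sigma_\epsilon^2/m$ vanishes, and the bound $\tau^2-\tau^4\rho^2(2\epsilon\mathbf{1})/(\tau^2+\sigma_\epsilon^2/m)\rightarrow\tau^2\left[1-\rho^2(2\epsilon\mathbf{1})\right]$ holds uniformly over the ball, which yields the stated $\limsup$. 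This derivation also makes transparent why observation noise does not degrade the limit (the all-ones test vector amounts to averaging the in-ball observations), and it uses only compactness of $\mathcal{X}$ (for existence of $x^{acc}$) and conditions (a)--(c) on $\rho$; the logarithmic continuity condition (d) plays no role at this step, consistent with its use elsewhere in the consistency analysis.
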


Recall that our Co-KG function is composed of one $q$-KG function with multiple decision variables as the input and multiple (regular) KG functions with one decision variable as the input. In the main text, we subtract the maximum posterior mean for simplification. That is, an equivalent definition of the KG function is
\begin{equation}
\label{eq.kg}
    \alpha_\text{KG}\left ( x \right )  = \mathbb{E}_{\tilde{\mathcal{S}}}\left[\max_{x'\in\mathcal{X}}\mathbb{E}\left[f\left ( x' \right )\mid \tilde{\mathcal{S}}_x^*\right]\right] - \max_{x'\in\mathcal{X}}\mathbb{E}\left[f\left ( x' \right )\mid \tilde{\mathcal{S}}\right].
\end{equation}
Here $\tilde{\mathcal{S}}$ represents the historical observations, and $\tilde{\mathcal{S}}_x^* = \tilde{\mathcal{S}}\cup\left\{\left(x,y(x)\right)\right\}$ is the updated dataset if an additional observation $y(x)$ is decided to be collected at the decision variable $x$. In this manner, regarding the posterior mean as the approximated objective function, KG represents the increment of the optimal value if an additional sample is collected at $x.$ Since the term $$\max_{x'\in\mathcal{X}}\mathbb{E}\left[f\left ( x' \right )\mid \tilde{\mathcal{S}}\right]$$ does not involve the decision variable $x$ to be optimized, we do not include it in the main text considering the limited length. A similar definition of the $q$-KG is in \cite{wu2016parallel}. When we prove the consistency of the collaborative BO procedure with Co-KG in this section, these terms are included. Furthermore, we note that the KG function is non-negative, to see this, we use the Jensen inequality:
\begin{equation*}
    \begin{aligned}
        \mathbb{E}\left[\max_{x'\in\mathcal{X}}\mathbb{E}\left[f\left(x'\right)\mid \tilde{\mathcal{S}}^*_{x}\right]\right] = &\mathbb{E}\left[\max_{x'\in\mathcal{X}}\mathbb{E}\left[\tilde{\mu}\left(x'\right)+\tilde{\sigma}\left ( x,x' \right ) \xi\right]\right]\\
        \geqslant&\max_{x'\in\mathcal{X}}\tilde{\mu}\left(x'\right)+\mathbb{E}\left[\tilde{\sigma}\left ( x,x' \right ) \xi\right]\\
    =&\max_{x'\in\mathcal{X}}\mathbb{E}\left[f\left ( x' \right )\mid \tilde{\mathcal{S}}\right],
    \end{aligned}
\end{equation*}
where $\tilde{\sigma}\left(x,x'\right)=\tilde{K}\left(x,x'\right)/\sqrt{\tilde{K} \left(x,x\right)+\sigma^2_\epsilon}.$The non-negativity based on the Jensen inequality also holds for the $q$-KG function with a similar argument \cite{wu2016parallel}. Since our Co-KG function is a weighted summation of a $q$-KG function and multiple regular KG functions, it is non-negative as well.

Regarding the KG function associated with each local GP model (\ref{eq.kg}), we have 
\begin{equation}
\label{eq.bound}
    \begin{aligned}
        \alpha_\text{KG}(x)=&\mathbb{E}\left[\max_{x'\in\mathcal{X}}\mathbb{E}\left[f\left(x'\right)\mid \tilde{\mathcal{S}}^*_{x}\right]\right]-\max_{x'\in\mathcal{X}}\mathbb{E}\left[f\left ( x' \right )\mid \tilde{\mathcal{S}}\right]\\ = &\mathbb{E}\left[\max_{x'\in\mathcal{X}}\mathbb{E}\left[\tilde{\mu}\left(x'\right)+\tilde{\sigma}\left ( x,x' \right ) \xi\right]\right]-\max_{x'\in\mathcal{X}}\mathbb{E}\left[f\left ( x' \right )\mid \tilde{\mathcal{S}}\right]\\
        \leqslant&\max_{x'\in\mathcal{X}}\tilde{\mu}\left(x'\right)+\mathbb{E}\left[\max_{x'\in\mathcal{X}
        }\tilde{\sigma}\left ( x,x' \right ) \xi\right]-\max_{x'\in\mathcal{X}}\mathbb{E}\left[f\left ( x' \right )\mid \tilde{\mathcal{S}}\right]\\
        =&\mathbb{E}\left[\max_{x'\in\mathcal{X}
        }\tilde{\sigma}\left ( x,x' \right ) \xi\right]\\
        \leqslant&\mathbb{E}\left [ \left|\xi\right| \right ]\max_{x'\in\mathcal{X}
        }\tilde{\sigma}\left ( x,x' \right )\\
        =&\sqrt{\frac{2}{\pi}}\max_{x'\in\mathcal{X}
        }\tilde{\sigma}\left ( x,x' \right )
        \end{aligned}
\end{equation}
Furthermore, we have that
\begin{equation}
\label{ine1}
    \begin{aligned}
       \tilde{\sigma}\left ( x,x' \right )&= \frac{\tilde{K} \left(x ,x'\right)}{\sqrt{\tilde{K} \left(x,x\right)+\sigma^2_\epsilon}}\\
       &\leqslant \sqrt{\frac{\tilde{K} \left(x ,x\right)\tilde{K} \left(x ',x'\right)}{\tilde{K} \left(x,x\right)+\sigma^2_\epsilon}}\\
       &\leqslant \sqrt{\frac{\tau^2\tilde{K} \left(x ,x\right)}{\sigma^2_\epsilon}},
    \end{aligned}
\end{equation}
where the last inequality comes from the fact that $\tilde{K}\left(x',x'\right)$ is a non-increasing sequence regarding $t$ and the conditions in Assumption \ref{assumption1}. Thus, from (\ref{eq.bound}) and (\ref{ine1}), we have
\begin{equation}
\label{kgbound}
    \alpha_\text{KG}(x)\leqslant\sqrt{\frac{2\tau^2\tilde{K} \left(x ,x\right)}{\pi\sigma^2_\epsilon}}
\end{equation}

Regarding the $q$-KG function, we have 
\begin{equation*}
%\label{eq.expression}
    \mathbb{E}\left [ f^c\left(x'\right)\mid \tilde{\mathcal{S}}^*_{\mathbf{x}} \right ] = \mu^c\left(x'\right)+\bm{\sigma}^c\left(\mathbf{x},x'\right)\bm{\xi},
\end{equation*}
where $\bm{\xi}\sim\mathcal{N}\left(\bm{0},\bm{I}_{N}\right)$. Additionally,
\begin{equation*}
    \bm{\sigma}^c\left(\mathbf{x},x'\right) = \bm{K}^c\left(\mathbf{x},x'\right) ^{\top}\left(\bm{D}\left(\mathbf{x}\right)^{\top}\right)^{-1},
\end{equation*}
where $\bm{D}\left(\mathbf{x}\right)$ is the Cholesky factor of the matrix $$\mathbf{\Sigma}\left(\mathbf{x}\right)=\begin{pmatrix}
K^c\left ( x_1,x_1 \right )   & \dots &K^c\left ( x_1,x_N \right ) \\
 \vdots & \ddots & \vdots\\
 K^c\left ( x_N,x_1 \right ) & \dots & K^c\left ( x_N,x_N \right )
\end{pmatrix}+\sigma^2_\epsilon \bm{I}_N.$$
Note that, 
\begin{equation*}
    \bm{\sigma}^c\left(\mathbf{x},x'\right)\bm{\xi}
    \sim 
    \mathcal{N}\left ( 0, \left \| \bm{\sigma}^c\left(\mathbf{x},x'\right) \right \|  \right ) .
\end{equation*}
With a similar argument as in (\ref{eq.bound}), the $q$-KG function is bounded by
\begin{equation*}
    \alpha_\text{$q$-KG}\left(\mathbf{x}\right)\leqslant \sqrt{\frac{2}{\pi}}\max_{x'\in\mathcal{X}}\left \| \bm{\sigma}^c\left(\mathbf{x},x'\right) \right \|.
\end{equation*}
Furthermore, 
\begin{equation}
\label{eq.qkgbound}
    \begin{aligned}
        &\left \| \bm{\sigma}^c\left(\mathbf{x},x'\right) \right \|^2 \\
        =&\bm{K}^c\left(\mathbf{x},x'\right) ^{\top}\mathbf{\Sigma}\left(\mathbf{x}\right)\bm{K}^c\left(\mathbf{x},x'\right)\\
        \leqslant&K^c\left(x',x'\right)   \\
        =&\left ( \frac{1}{N}\sum_{n=1}^N \tilde{K}_n\left(x',x'\right)  \right ) +\frac{1}{N} \sum_{n=1}^{N}\left(\tilde{\mu}_n\left ( x' \right )-\mu^c\left ( x' \right )  \right)^2,
    \end{aligned}
\end{equation}
where the last equality comes from the fact that the central GP is a 2-Wasserstein barycenter of local GPs. In this manner, we have connected the terms of the Co-KG function to the variances of local models. We here present a proposition regarding the conditional mean function $\tilde{\mu}\left(x\right)$.
\begin{proposition}[Proposition 2.9 in \cite{10.3150/18-BEJ1074}]
\label{prop.mu}
    Under Assumption \ref{assumption1}, the conditional mean function converges to $\mu^{\infty}(x)\doteq\mathbb{E}\left [ f(x)\mid \mathcal{F}_\infty \right ]$ uniformly in $x\in\mathcal{X}$ almost surely (a.s.). That is, 
    \begin{equation*}
        \mathbb{P}\left\{\sup _{x \in \mathcal{X}}\left|\tilde{\mu}(x )-\mu^{\infty}(x )\right| \rightarrow 0\right\}=1
    \end{equation*}
    as $t\rightarrow\infty$.

    Here $\mathcal{F}_\infty$ denotes the filtration of the dataset collection when the number of iterations approaches infinity.  
    
\end{proposition}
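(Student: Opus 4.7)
The plan is to combine Doob's martingale convergence theorem (for pointwise convergence on a dense subset) with a stochastic equicontinuity argument enabled by the kernel regularity, then conclude uniform convergence via compactness of $\mathcal{X}$.

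For each fixed $x\in\mathcal{X}$, the sequence $M_t(x)\doteq\tilde{\mu}(x)=\mathbb{E}[f(x)\mid\mathcal{F}_t]$ is a martingale with respect to the filtration $\{\mathcal{F}_t\}_{t\geq 0}$ generated by the observations and (adaptively selected) query points, by the tower property. Because $\mathbb{E}[f(x)^2]=\tau^2$ is bounded uniformly in $x$, this martingale is $L^2$-bounded uniformly in $t$, so Doob's $L^2$-martingale convergence theorem yields $M_t(x)\to\mu^\infty(x)\doteq\mathbb{E}[f(x)\mid\mathcal{F}_\infty]$ almost surely and in $L^2$. Choosing a countable dense subset $D\subset\mathcal{X}$ and taking a union bound over countably many exceptional null sets then gives simultaneous a.s.\ convergence at every $x\in D$ on a single event of probability one.

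To upgrade from pointwise to uniform convergence, I would control a modulus of continuity of $\tilde{\mu}$ uniformly in $t$. By the law of total variance,
\begin{equation*}
\operatorname{Var}\bigl(\tilde{\mu}(x)-\tilde{\mu}(x')\bigr)=\operatorname{Var}\bigl(f(x)-f(x')\bigr)-\mathbb{E}\bigl[\operatorname{Var}\bigl(f(x)-f(x')\mid\mathcal{F}_t\bigr)\bigr]\leq 2\tau^2\bigl(1-\rho(x-x')\bigr),
\end{equation*}
since the posterior-variance term is nonnegative and the prior increment variance equals $2\tau^2(1-\rho(x-x'))$ under Assumption \ref{assumption1}. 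Crucially, this upper bound is independent of $t$. Combined with the logarithmic decay condition in Assumption \ref{assumption1} and the Gaussianity of the increments (conditional on the query sites), a Dudley chaining / Borell-TIS argument on the compact $\mathcal{X}$ produces a common random modulus of continuity $\omega(\delta)\to 0$ as $\delta\to 0$ that bounds every $\tilde{\mu}$ almost surely, and the same modulus is inherited by $\mu^\infty$ via Fatou's lemma. An Arzela-Ascoli-style $\epsilon/3$-argument (cover $\mathcal{X}$ by a finite $\delta$-net from $D$ and apply pointwise convergence at those finitely many points) then closes the proof: $\sup_{x\in\mathcal{X}}|\tilde{\mu}(x)-\mu^\infty(x)|\to 0$ almost surely.

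The main obstacle is the equicontinuity step. In the BO setting the query points are chosen adaptively and are correlated with past observations, so $\tilde{\mu}$ is not a simple linear functional of an independent Gaussian vector, and elementary concentration bounds do not apply termwise. The clean variance inequality above sidesteps this by relying only on the tower/total-variance identity, which is insensitive to the sampling policy; the passage from $L^2$ variance control to almost-sure uniform equicontinuity then invokes general Gaussian-process regularity theory, and the logarithmic modulus in Assumption \ref{assumption1} is precisely what is needed to make the associated entropy integral converge on the compact $\mathcal{X}$.
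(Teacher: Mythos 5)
First, note that the paper does not prove this proposition at all: it is quoted verbatim from the cited reference (Proposition 2.9 of Bect--Bachoc--Ginsbourger), whose argument treats $\tilde{\mu}_t=\mathbb{E}\left[f\mid\mathcal{F}_t\right]$ as a closed martingale taking values in a function space (sample paths of $f$ lie a.s.\ in $C(\mathcal{X})$ with $\mathbb{E}\sup_{x}|f(x)|<\infty$ under the logarithmic-modulus condition, so vector-valued closed-martingale convergence gives convergence in the supremum norm directly). Your first step --- Doob/L\'evy upward convergence of the scalar martingale $M_t(x)$ for each fixed $x$, then a countable dense set --- is correct and matches the pointwise core of that argument.

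The gap is in your equicontinuity step. The law-of-total-variance bound $\operatorname{Var}\bigl(\tilde{\mu}_t(x)-\tilde{\mu}_t(x')\bigr)\leq 2\tau^2\bigl(1-\rho(x-x')\bigr)$ is only an $L^2$ statement, uniform in $t$; it does not by itself produce an almost-sure modulus of continuity valid simultaneously for all $t$, which is what your $\epsilon/3$ argument needs. The appeal to Dudley chaining / Borell--TIS to bridge this is not justified as stated: those tools require $\tilde{\mu}_t(\cdot)$ to be a Gaussian (or conditionally Gaussian) process, but under adaptive sampling the query sites are functions of past observations, so conditioning on the sites alone does not make the observation vector Gaussian, and unconditionally $\tilde{\mu}_t$ is not Gaussian --- precisely the obstacle you acknowledge but then claim is "sidestepped" by the variance identity, which it is not (even for genuinely Gaussian families, per-$t$ concentration must still be made uniform over infinitely many $t$, e.g.\ by summability/Borel--Cantelli). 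A correct patch that stays close to your outline: use $|\tilde{\mu}_t(x)-\tilde{\mu}_t(x')|\leq\mathbb{E}\bigl[\omega_f(\|x-x'\|)\mid\mathcal{F}_t\bigr]$, where $\omega_f$ is the (random, integrable) modulus of continuity of the prior sample path guaranteed by Assumption 1(2)(d); for fixed $\delta$ the right-hand side is a nonnegative closed martingale, so Doob's maximal inequality plus Borel--Cantelli along $\delta_k\downarrow 0$ yields a single a.s.\ modulus controlling all $t$ at once, after which your finite-net argument closes the proof. Alternatively, invoke the $C(\mathcal{X})$-valued martingale convergence theorem outright, which is the route of the cited reference and subsumes both of your steps.
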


Next we consider a rescaled Co-KG function:
\begin{equation*}
    \label{eq.recokg}
    \alpha_\text{Co-KG}\left(\mathbf{x}\right) \doteq\mathbb{E}_{\tilde{\mathcal{S}}}\left\{\frac{\alpha^c\left(\mathbf{x}\right)}{\beta_t}+ \sum_{n=1}^N\alpha_n\left(x\right)\right\},
\end{equation*}
where $\alpha^c(\mathbf{x})$ is the $q$-KG function defined on the central GP and $\alpha_n(x)$ is the regular KG function defined on the $n$-th local GP model.
We provide an asymptotic property of Co-KG:
\begin{lemma}
\label{lemma.zero}    Under Assumption \ref{assumption1}, the limit inferior of Co-KG is 0. That is, 
    \begin{equation*}
        \liminf_{t\rightarrow\infty }\alpha_\text{Co-KG}(\mathbf{x}) = 0\quad\forall \mathbf{x}\in\mathcal{X}^{N}.
    \end{equation*}
\end{lemma}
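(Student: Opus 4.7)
The plan is to split the rescaled Co-KG into its two structurally different pieces, namely the central contribution $\mathbb{E}[\alpha^c(\mathbf{x})]/\beta_t$ and the sum of local contributions $\sum_n \mathbb{E}[\alpha_n(x_n)]$, and produce a subsequence along which each piece vanishes. For the central piece, combine the $q$-KG upper bound $\alpha^c(\mathbf{x})\leq \sqrt{2/\pi}\,\max_{x'}\|\bm{\sigma}^c(\mathbf{x},x')\|$ with inequality (\ref{eq.qkgbound}) to obtain
\[
\alpha^c(\mathbf{x})^2\leq \tfrac{2}{\pi}\max_{x'\in\mathcal{X}}\Bigl\{\tfrac{1}{N}\sum_{n=1}^N\tilde{K}_n(x',x') + \tfrac{1}{N}\sum_{n=1}^N\bigl(\tilde{\mu}_n(x')-\mu^c(x')\bigr)^2\Bigr\}.
\]
Under Assumption \ref{assumption1} each $\tilde{K}_n(x',x')\leq\tau^2$, while Proposition \ref{prop.mu} guarantees that each $\tilde{\mu}_n$ converges uniformly on compact $\mathcal{X}$ to a bounded limit, so $\alpha^c(\mathbf{x})$ stays uniformly bounded in $t$ for fixed $\mathbf{x}$. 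Since we have prescribed $\beta_t\to\infty$, the rescaled central term converges to $0$ deterministically.

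For the local contributions, inequality (\ref{kgbound}) supplies the pointwise control $\alpha_n(x_n)\leq \sqrt{2\tau^2/(\pi\sigma_\epsilon^2)}\,\sqrt{\tilde{K}_n(x_n,x_n)}$, so it suffices to exhibit a subsequence along which the posterior variance $\tilde{K}_n(x_n,x_n)$ approaches $0$ for each agent $n$ and each $x_n$. Because adding observations can only decrease posterior variance, $t\mapsto \tilde{K}_n(x_n,x_n)$ is non-increasing and non-negative, so it converges pathwise to a limit $K_n^\infty(x_n,x_n)\geq 0$; thus the task reduces to showing $K_n^\infty(x_n,x_n)=0$. At an accumulation point $x_n^{acc}$ of agent $n$'s sample sequence (whose existence follows from compactness of $\mathcal{X}$), Lemma \ref{lemma1} gives $\limsup_{t\to\infty}\max_{x\in\mathcal{B}(x_n^{acc},\epsilon)}\tilde{K}_n(x,x)\leq\tau^2[1-\rho^2(2\epsilon\mathbf{1})]$, and sending $\epsilon\downarrow 0$ together with $\rho(\mathbf{0})=1$ from Assumption \ref{assumption1} yields $K_n^\infty(x_n^{acc},x_n^{acc})=0$.

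The main obstacle will be promoting this vanishing from accumulation points of the sample sequence to arbitrary $x_n\in\mathcal{X}$. The plan is to argue, following the KG-consistency strategy of \cite{ding2022knowledge}, that the Co-KG greedy rule forces the set of accumulation points of the per-agent samples to be dense in $\mathcal{X}$: if some $x_n$ were isolated from all accumulation points, then the bound on $\alpha_n$ at $x_n$ would remain bounded away from $0$, and the maximization step (line 6 of Algorithm \ref{alg:1}) would then place infinitely many samples close to $x_n$, contradicting the assumption that $x_n$ is not an accumulation point. Density of accumulation points, combined with the continuity of $\tilde{K}_n$ (itself a consequence of the continuity of $\rho$), lets us transport the limit $K_n^\infty(\cdot,\cdot)=0$ from accumulation points to all of $\mathcal{X}$. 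Assembling the vanishing central and local pieces yields $\liminf_{t\to\infty}\alpha_{\text{Co-KG}}(\mathbf{x})=0$ for every $\mathbf{x}\in\mathcal{X}^N$.
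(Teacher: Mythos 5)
Your treatment of the central term is fine and matches the paper: bound $\alpha^c$ via (\ref{eq.qkgbound}), note the bracket is bounded (using $\tilde K_n(x',x')\leqslant\tau^2$ and Proposition \ref{prop.mu}), and divide by $\beta_t\to\infty$. The gap is in the local terms, and it comes from missing the one observation that makes this lemma easy: the Co-KG function is non-negative and $\mathbf{x}(t)$ in line 6 of Algorithm \ref{alg:1} is its \emph{maximizer}, so for any fixed $\mathbf{x}$ one has $\alpha_\text{Co-KG}(\mathbf{x})\leqslant\alpha_\text{Co-KG}(\mathbf{x}(t))$ at every $t$. Hence it suffices to show the acquisition value vanishes along (a subsequence of) the \emph{selected} points, which follows directly from compactness of $\mathcal{X}$, Lemma \ref{lemma1} applied at an accumulation point of each agent's selected sequence (taking a common subsequence for all agents), and the upper bound (\ref{kgbound}); the liminf at an arbitrary fixed $\mathbf{x}$ is then squeezed to zero by domination. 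You instead try to prove the much stronger statement that $\tilde K_n(x_n,x_n)\to 0$ at \emph{every} $x_n\in\mathcal{X}$ — but that is exactly Lemma \ref{lemmashrink}, which in the paper is proved \emph{from} the present lemma by contradiction, so your route inverts the logical order and has to rebuild that machinery from scratch.

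Concretely, your density-of-accumulation-points argument does not go through as sketched. The bound (\ref{kgbound}) is an \emph{upper} bound on $\alpha_n$, so the fact that it "remains bounded away from $0$" at an isolated $x_n$ tells you nothing about $\alpha_n(x_n)$ itself; to force the greedy rule to sample near $x_n$ you need a positive \emph{lower} bound on the KG value when residual posterior variance persists, which requires the explicit computation with $g(s,t)=t\phi(s/t)-s\Phi(-s/t)$ and Proposition \ref{prop.mu} (this is precisely the content of the paper's proof of Lemma \ref{lemmashrink}, following \cite{scott2011correlated,ding2022knowledge}), and you have not supplied it. There is also a smaller issue you would face even after fixing this: showing each local term has liminf zero separately does not give liminf zero for their sum unless the vanishing occurs along a common subsequence; working along the selected sequence, as the paper does, provides that common subsequence for free.
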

\begin{proof}
We first provide the notation here. We denote the sequence of decision variables selected by maximizing Co-KG as 
\begin{equation*}
    \mathbf{x}(t) =\left ( x_1\left ( t \right ),x_2 \left ( t \right ),
\ldots, x_N\left ( t \right ) \right ) ,
\end{equation*}
where $x_n(t)$ denotes the decision variables for the $n$-th agent in the $t$-th iteration. In the main text, we did not emphasize the dependence on $t$ for notational simplicity. For each local model, note that the sequence of the selected $x_n(t)$'s for this local GP has an accumulative point $x^{acc}$. We denote the subsequence as $z_n(1),z_n(2),\ldots,z_n\left(t'\right)$ such that $z_n\left(t'\right)\rightarrow x^{acc}$. Based on Lemma \ref{lemma1}, we have 
    \begin{equation*}
        \limsup _{t \rightarrow \infty} \operatorname{Var}\left[f\left(z_n\left(t'\right)\right)\mid\tilde{\mathcal{S}}_n\right] \leqslant\tau^2\left[1-\rho^2(2 \epsilon \mathbf{1})\right].
    \end{equation*}
Furthermore, from (\ref{kgbound}), we have 
    \begin{equation*}
        \limsup _{t \rightarrow \infty}\alpha(z_n\left(t'\right))\leqslant\sqrt{\frac{2\tau^2\tilde{K} \left(z_n\left(t'\right) ,z_n\left(t'\right)\right)}{\pi\sigma^2_\epsilon}}\leqslant\limsup _{t \rightarrow \infty}\sqrt{\frac{2\tau^4}{\pi\sigma^2_\epsilon}}\left[1-\rho^2(2 \epsilon \mathbf{1})\right].
    \end{equation*}
    Let $\epsilon\rightarrow0$, we have that 
    \begin{equation}
    \label{eq.kglimit}
        \liminf_{t \rightarrow \infty} \alpha_\text{KG}\left(x_n(t)\right)\leqslant\liminf_{t \rightarrow \infty}\alpha_\text{KG}\left(z_n\left(t'\right)\right)\leqslant\limsup _{t \rightarrow \infty}\alpha_\text{KG}(z_n\left(t'\right)) =0\quad \forall n.
    \end{equation}
    The last equality holds because the KG function $\alpha(x)$ is non-negative.

    Then we look at the central GP and the associated $q$-KG function. Recall that, from (\ref{eq.qkgbound}), we have 
    \begin{equation*}
        \alpha_\text{$q$-KG}\left(\mathbf{x}\right)\leqslant \sqrt{\frac{2}{\pi}\left(\left ( \frac{1}{N}\sum_{n=1}^N \tilde{K}_n\left(x',x'\right)  \right ) +\frac{1}{N} \sum_{n=1}^{N}\left(\tilde{\mu}_n\left ( x' \right )-\mu^c\left ( x' \right )  \right)^2\right)}\quad \forall \mathbf{x},
    \end{equation*}
    where $x'$ denotes some maximizer. Note that $\tilde{K}_n\left(x',x'\right)\leqslant K\left(x',x'\right)$ is bounded, and each $\left(\tilde{\mu}_n\left ( x' \right )-\mu^c\left ( x' \right )  \right)^2$ is bounded as well because of Proposition \ref{prop.mu}. Thus, as $t\rightarrow \infty$, 
    \begin{equation*}
        \lim_{t\rightarrow\infty}\frac{\alpha_\text{$q$-KG}\left(\mathbf{x}\left(t\right)\right)}{\beta_t} 
    \end{equation*}
    since $\beta_t\rightarrow\infty$ from the definition in the main text. This further leads to 
    \begin{equation*}
        \liminf_{t\rightarrow\infty}\alpha_\text{Co-KG}\left(\mathbf{x}\left(t\right)\right)=0
    \end{equation*} 
    because of (\ref{eq.kglimit}). Recall that 
    \begin{equation*}
        \mathbf{x}\left(t\right)=\arg\max_{\mathbf{x}\in \mathcal{X}^{N}}\alpha_\text{Co-KG}\left(\mathbf{x}\right).
    \end{equation*}
    Thus, $\forall \mathbf{x}$, we have 
    \begin{equation*}
        0\leqslant\liminf_{t\rightarrow0}\alpha_\text{Co-KG}(\mathbf{x})\leqslant\liminf_{t\rightarrow0}\alpha_\text{Co-KG}(\mathbf{x}(t))=0
    \end{equation*}
\end{proof}

Now we prove that for all local GP models, the uncertainty at each decision variable (represented by the conditional variance) shrinks to zero. Specifically, we have the following lemma.  
\begin{lemma}
\label{lemmashrink}    Under Assumption \ref{assumption1}, $\forall n$ and $\forall x\in\mathcal{X}$, we have 
    \begin{equation}\label{result}
        \lim_{t\rightarrow\infty}\operatorname{Var}\left [ f\left ( x \right )\mid\tilde {\mathcal{S} }_n  \right ] =0\quad\forall x\in\mathcal{X},\forall n.
    \end{equation}
\end{lemma}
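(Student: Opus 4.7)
The plan is to establish \eqref{result} by contradiction, using Lemma \ref{lemma.zero} together with a uniform lower bound on each local KG term. First observe that, for GP regression with Gaussian noise, the sequential conditioning formula
$$\tilde{K}_{n,t+1}(x,x) = \tilde{K}_{n,t}(x,x) - \frac{\tilde{K}_{n,t}(x, x_{n;t+1})^2}{\tilde{K}_{n,t}(x_{n;t+1},x_{n;t+1})+\sigma_\epsilon^2}$$
shows that $\tilde{K}_{n,t}(x,x)$ is monotonically non-increasing in $t$ for every fixed $x$. Hence for each $n$ and each $x\in\mathcal{X}$ the limit $\eta_n(x) \doteq \lim_{t\to\infty}\tilde{K}_n(x,x)$ is well-defined, and it suffices to show $\eta_n(x)=0$ for all $n,x$.

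Suppose, for contradiction, that there exist $n^\star\in\{1,\dots,N\}$ and $x^\star\in\mathcal{X}$ with $\eta_{n^\star}(x^\star)>0$. Then $\tilde{K}_{n^\star}(x^\star,x^\star)\geqslant \eta_{n^\star}(x^\star)$ for all $t$, so $\tilde{\sigma}_{n^\star}(x^\star,x^\star) \geqslant \eta_{n^\star}(x^\star)/\sqrt{\eta_{n^\star}(x^\star)+\sigma_\epsilon^2} \doteq \sigma^\star > 0$. I would then argue that this standard-deviation lower bound forces the local KG term $\alpha_{n^\star}(x^\star)$ to be bounded below by a strictly positive constant uniformly in $t$. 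Combined with the non-negativity of the $q$-KG term and of the remaining local KG terms in the rescaled Co-KG introduced just before Lemma \ref{lemma.zero}, this yields $\alpha_\text{Co-KG}(\mathbf{x}) \geqslant \alpha_{n^\star}(x^\star) > 0$ for every batch $\mathbf{x}$ with $x_{n^\star}=x^\star$, directly contradicting Lemma \ref{lemma.zero}.

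To produce the needed lower bound I would exploit the convexity in $\xi$ of the map $\xi\mapsto \max_{x'}[\tilde{\mu}_{n^\star}(x')+\tilde{\sigma}_{n^\star}(x^\star,x')\xi]$. Evaluating this inner maximum at the two test points $x'=x^\star$ and $x'=x^{\diamond}_t\in\arg\max_{x'}\tilde{\mu}_{n^\star}(x')$ and rearranging via the identity $\max(a,b)=b+(a-b)_+$ produces
$$\alpha_{n^\star}(x^\star) \geqslant \mathbb{E}\bigl[\bigl(\tilde{\mu}_{n^\star}(x^\star)-\max_{x'}\tilde{\mu}_{n^\star}(x')+\Delta_t\,\xi\bigr)_+\bigr],$$
where $\Delta_t \doteq \tilde{\sigma}_{n^\star}(x^\star,x^\star)-\tilde{\sigma}_{n^\star}(x^\star,x^{\diamond}_t)$. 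Proposition \ref{prop.mu} guarantees that $\tilde{\mu}_{n^\star}$ converges uniformly a.s., so the constant offset inside the positive part stays bounded in $t$. The right-hand side is then the expected positive part of a Gaussian with bounded mean and standard deviation $|\Delta_t|$, which is bounded below by a strictly positive constant provided $|\Delta_t|$ is bounded away from zero.

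The hardest step will be securing this non-degeneracy $|\Delta_t|>\delta>0$. In the corner case $\tilde{\sigma}_{n^\star}(x^\star,x^{\diamond}_t)\to \tilde{\sigma}_{n^\star}(x^\star,x^\star)$ along a subsequence (which includes the event $x^{\diamond}_t=x^\star$), one must insert a third test point $\hat{x}$ in a small neighborhood of $x^\star$ inside the inner maximum; the strict monotonicity and continuity of $\rho$ from Assumption \ref{assumption1}(2) guarantee that the family $\{\tilde{\sigma}_{n^\star}(x^\star,x')\}_{x'\in\mathcal{X}}$ is not identically constant near $x^\star$, yielding a non-vanishing slope difference against at least one of the chosen test points. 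A compactness argument over the compact set $\mathcal{X}$ then converts the pointwise non-degeneracy into a uniform-in-$t$ lower bound on $\alpha_{n^\star}(x^\star)$, closing the contradiction and completing the proof.
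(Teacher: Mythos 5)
Your proposal is correct and takes essentially the same route as the paper: a contradiction argument that lower-bounds the local KG at a point of non-vanishing posterior variance by a two-point comparison between that point and the posterior-mean maximizer (your positive-part bound is exactly the paper's $g(s,t)=t\,\phi(s/t)-s\,\Phi(-s/t)$ with $s=|a_2-a_1|$ and $t=|\Delta_t|=|b_1-b_2|$), controls the mean gap via Proposition \ref{prop.mu}, and then contradicts Lemma \ref{lemma.zero} using non-negativity of the $q$-KG and remaining local KG terms. The non-degeneracy $|\Delta_t|\geqslant\delta>0$ that you identify as the hardest step is precisely the point the paper itself does not argue in detail (it asserts $\liminf_{t\rightarrow\infty}|b_1-b_2|\geqslant c'>0$ with a pointer to Theorem 5.6 of \cite{scott2011correlated}), so your sketch is at the same level of completeness as the paper's proof there.
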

\begin{proof}
    Without loss of generality, we assume that 
    \begin{equation}
      \label{eq.nonzero}  \lim_{t\rightarrow\infty}\operatorname{Var}\left [ f\left ( \tilde{x} \right )\mid\tilde {\mathcal{S} }_1  \right ]=c>0.
    \end{equation}
    This limit exists because of Lemma \ref{lemma1}. Also, since $\tilde{K}\left(x,x'\right)$ is continuous and the convergence in Lemma \ref{lemma1} is uniform, $\operatorname{Var}\left [ f\left ( x \right )\mid\tilde {\mathcal{S} }_1  \right ]$ is continuous as well. Regarding this agent, we denote the current posterior function as $\tilde{\mu}\left(x\right) = \mathbb{E}\left[f\left ( x \right )\mid \tilde{\mathcal{S}}_1\right] $ and the posterior function with additional $x$ as $\tilde{\mu}^*\left(x'\right) = \mathbb{E}\left[f\left ( x' \right )\mid \tilde{\mathcal{S}}^*_{x}\right] $. Furthermore, we denote $\tilde{x}^*=\arg\max_{x\in\mathcal{X} }\tilde{\mu}(x) $. Also, let
    \begin{equation*}
        \begin{aligned}
            a_1 =& \tilde{\mu}\left(\tilde{x}^*\right)   \\
            b_1 =& \tilde{\sigma}\left(x, \tilde{x}^*\right) \\
            a_2 =& \tilde{\mu}\left(x\right) \\
            b_2 =& \tilde{\sigma}\left(x, x\right)  .\\
        \end{aligned}
    \end{equation*}
We consider the KG function associated with this agent:   
$$
\begin{aligned}
& \alpha_{\text{KG}}(x) \\
& =\mathbb{E}\left[\max_{x'\in\mathcal{X}}\mathbb{E}\left[f\left(x'\right)\mid \tilde{\mathcal{S}}^*_{x}\right]\right]-\max_{x'\in\mathcal{X}}\mathbb{E}\left[f\left ( x' \right )\mid \tilde{\mathcal{S}}\right] \\
& =\mathbb{E}\left[\max_{x'\in\mathcal{X}}\tilde{\mu}^*\left(x'\right)\right]-\max \left(\tilde{\mu}\left(\tilde{x}^*\right), \tilde{\mu}(x)\right) \\
& \geqslant \mathbb{E}\left[\max \left(\tilde{\mu}^*\left(\tilde{x}^*\right), \tilde{\mu}^*(x)\right)\right]-\max \left(\tilde{\mu}\left(\tilde{x}^*\right), \tilde{\mu}(x)\right) \\
& =\mathbb{E}\left[\max \left(\tilde{\mu}\left(\tilde{x}^*\right)+\tilde{\sigma}\left(x, \tilde{x}^*\right) \xi, \tilde{\mu}(x)+\tilde{\sigma}\left(x, x\right) \xi\right) \right]-\max \left(\tilde{\mu}\left(\tilde{x}^*\right), \tilde{\mu}(x)\right)\\
& =\mathbb{E}\left[\max \left(a_1+b_1 \xi, a_2+b_2 \xi\right)\right]-\max \left(a_1, a_2\right) \\
& = \begin{cases}\int_{-\infty}^{\frac{a_2-a_1}{b_1-b_2}}\left(a_2+b_2\xi\right) \phi(\xi) d\xi+\int_{\frac{a_2-a_1}{b_1-b_2}}^{\infty}\left(a_1+b_1\xi\right) \phi(\xi) d\xi-\max \left(a_1, a_2\right), & \text { if } b_2 \leqslant b_1 \\
\int_{-\infty}^{\frac{a_2-a_1}{b_1-b_2}}\left(a_1+b_1\xi\right) \phi(\xi) d\xi+\int_{\frac{a_2-a_1}{b_1-b_2}}^{\infty}\left(a_2+b_2\xi\right) \phi(\xi) d\xi-\max \left(a_1, a_2\right), & \text { if } b_1<b_2\end{cases} \\
& = \begin{cases}a_2 \Phi\left(\frac{a_2-a_1}{b_1-b_2}\right)-b_2 \phi\left(\frac{a_2-a_1}{b_1-b_2}\right)+a_1\left(1-\Phi\left(\frac{a_2-a_1}{b_1-b_2}\right)\right)+b_1 \phi\left(\frac{a_2-a_1}{b_1-b_2}\right)-\max \left(a_1, a_2\right), & \text { if } b_2 \leqslant b_1 \\
a_1 \Phi\left(\frac{a_2-a_1}{b_1-b_2}\right)-b_1 \phi\left(\frac{a_2-a_1}{b_1-b_2}\right)+a_2\left(1-\Phi\left(\frac{a_2-a_1}{b_1-b_2}\right)\right)+b_2 \phi\left(\frac{a_2-a_1}{b_1-b_2}\right)-\max \left(a_1, a_2\right), & \text { if } b_1<b_2\end{cases} \\
& =a_2 \Phi\left(\frac{a_2-a_1}{\left|b_1-b_2\right|}\right)+a_1\left(1-\Phi\left(\frac{a_2-a_1}{\left|b_1-b_2\right|}\right)\right)+\left|b_1-b_2\right| \phi\left(\frac{a_2-a_1}{\left|b_1-b_2\right|}\right)-\max \left(a_1, a_2\right) \\
& =-\left|a_2-a_1\right| \Phi\left(\frac{-\left|a_2-a_1\right|}{\left|b_1-b_2\right|}\right)+\left|b_1-b_2\right| \phi\left(\frac{\left|a_2-a_1\right|}{\left|b_1-b_2\right|}\right),
\end{aligned}
$$
where $\Phi$ is the standard normal distribution function and $\phi$ is its density function.

Let $g(s, t):=t \phi(s / t)-s \Phi(-s / t)$. Then 1) $g(s, t)>0$ for all $s \geqslant 0$ and $t>0$; 2) $g(s, t)$ is strictly decreasing in $s \in[0, \infty)$ and strictly increasing in $t \in(0, \infty)$; and 3) $g(s, t) \rightarrow 0$ as $s \rightarrow \infty$ or as $t \rightarrow 0$. See more details in \cite{scott2011correlated,ding2022knowledge}. By letting $x=\tilde{x}$, which is defined in (\ref{eq.nonzero}), we have that $\liminf_{t\rightarrow\infty}\left|b_1-b_2\right|\geqslant c'$ for some constant $c'>0$. Meanwhile, based on Proposition \ref{prop.mu}, we have $\limsup_{t\rightarrow\infty}\left|a_2-a_1\right|\leqslant r'$ for some constant $r'<\infty$. Thus, 
\begin{equation*}
    \liminf_{t\rightarrow\infty} \alpha _{\text{KG}}\left(\tilde{x}\right)\geqslant g\left(r',c'\right)>0.
\end{equation*}
A similar argument is in Theorem 5.6 in \cite{scott2011correlated}. On the other hand, since $q$-KG and KG are non-negative, we have $$\liminf_{t\rightarrow\infty}\alpha_{\text{Co-KG}}(\mathbf{x})\geqslant \liminf_{t\rightarrow\infty}\alpha_{\text{KG}}(x)>0,$$
where $\tilde{\mathbf{x}}$ includes $\tilde{x}$ as one component. This provides a contradiction with Lemma \ref{lemma.zero}, which proves the result in (\ref{result}).

\end{proof}

Based on Lemma \ref{lemmashrink}, we prove Theorem 2 in the main text. 
\begin{proof}
    For each local GP, we have 
    \begin{equation*}
        \mathbb{E}\left[\tilde{\mu}_n\left(x\right)-f\left ( x \right ) \right]^2=\operatorname{Var} \left [ f\left ( x \right ) \mid\tilde{\mathcal{S}}_n\right ] \rightarrow 0
    \end{equation*}
    from Lemma \ref{lemmashrink}. In addition, based on Proposition \ref{prop.mu}, we have that 
    \begin{equation*}
        \mathbb{P}\left\{\sup _{x \in \mathcal{X}}\left|\tilde{\mu}_n(x)-\mu_n^{\infty}(x)\right| \rightarrow 0\right\}=1\quad\forall n
    \end{equation*}
    as $t\rightarrow\infty$. Thus, $\mu_n^\infty\left ( x \right ) \stackrel{a.s.}{=}f(x)$ and 
    \begin{equation*}
        \mathbb{P}\left\{\sup _{x \in \mathcal{X}}\left|\tilde{\mu}_n(x)-f(x)\right| \rightarrow 0\right\}=1\quad\forall n
    \end{equation*}
    as $t\rightarrow\infty$. Since kernel function $K\left(x,x'\right)$ is continuous, the conditional mean function $\tilde{\mu}_n(x)$, as well as $f(x)$, is continuous as well. Therefore, for each local model, the optimizer submitted satisfies that 
    \begin{equation*}
        \lim _{t\rightarrow\infty}f\left ( \hat{x}^*_n \right ) =\max_{x\in\mathcal{X} }f(x).
    \end{equation*}
    We refer to \cite{van2000asymptotic} for a detailed proof. In this manner, we have the consistency of the collaborative BO procedure with Co-KG (as summarized in Algorithm 1 in the main text):
    \begin{equation*}
        \lim _{t\rightarrow\infty}f\left ( \hat{x}^* \right ) =\max_{x\in\mathcal{X} }f(x).
    \end{equation*}
\end{proof}

\section{Additional Experiments}
We present additional experiments here, including 1) the effects of different selections of $\beta_t$ in the Co-KG function, 2) the effects of the discretization of the feasible set $\mathcal{X}$, 3) the comparison between the Co-KG procedure with different numbers of agents. Our experiments were conducted with Botorch \cite{balandat2020botorch} and Python 3.9 on a computer equipped with two AMD Ryzen Threadripper 3970X 32-Core Processors, 128 GB memory, and a Nvidia GeForce RTX A6000 GPU with 48GB of RAM. The implementation will be released once accepted.

\subsection{Feasible Set Discretization}
\label{sec.exp2}
We discuss the impacts of feasible set discretization here. Regarding the hyperparameter, we set $\beta_t = \log\left(2t+1\right)$. We have $N=4$ agents. We normalize the feasible set to $\mathcal{X}=[0, 1]^2$ and discretize the feasible set using 1) $10\times10$, 2) $20 \times 20$, and 3) $30\times 30$ uniform mesh grids. We also include the results associated with the parallel BO approach without data privacy concerns ($q$-KG) for comparison, and the procedure is indicated by ``Data Communication''.

Regarding the black-box optimization problem, we consider minimizing the validation loss of training a neural network. Specifically, the decision variable is the learning rate and the hidden layer node size of the neural networks. The unknown objective function is the validation loss we would minimize. The dataset is about California housing\footnote{\url{https://www.dcc.fc.up.pt/~ltorgo/Regression/cal\_housing.html}}, where the neural network is learned to predict median value of houses in different districts given demographic attributes. We do not impose noise on the observations in the real dataset. 
\begin{figure}[!ht]
    \centering
        \includegraphics[width=0.88\textwidth]{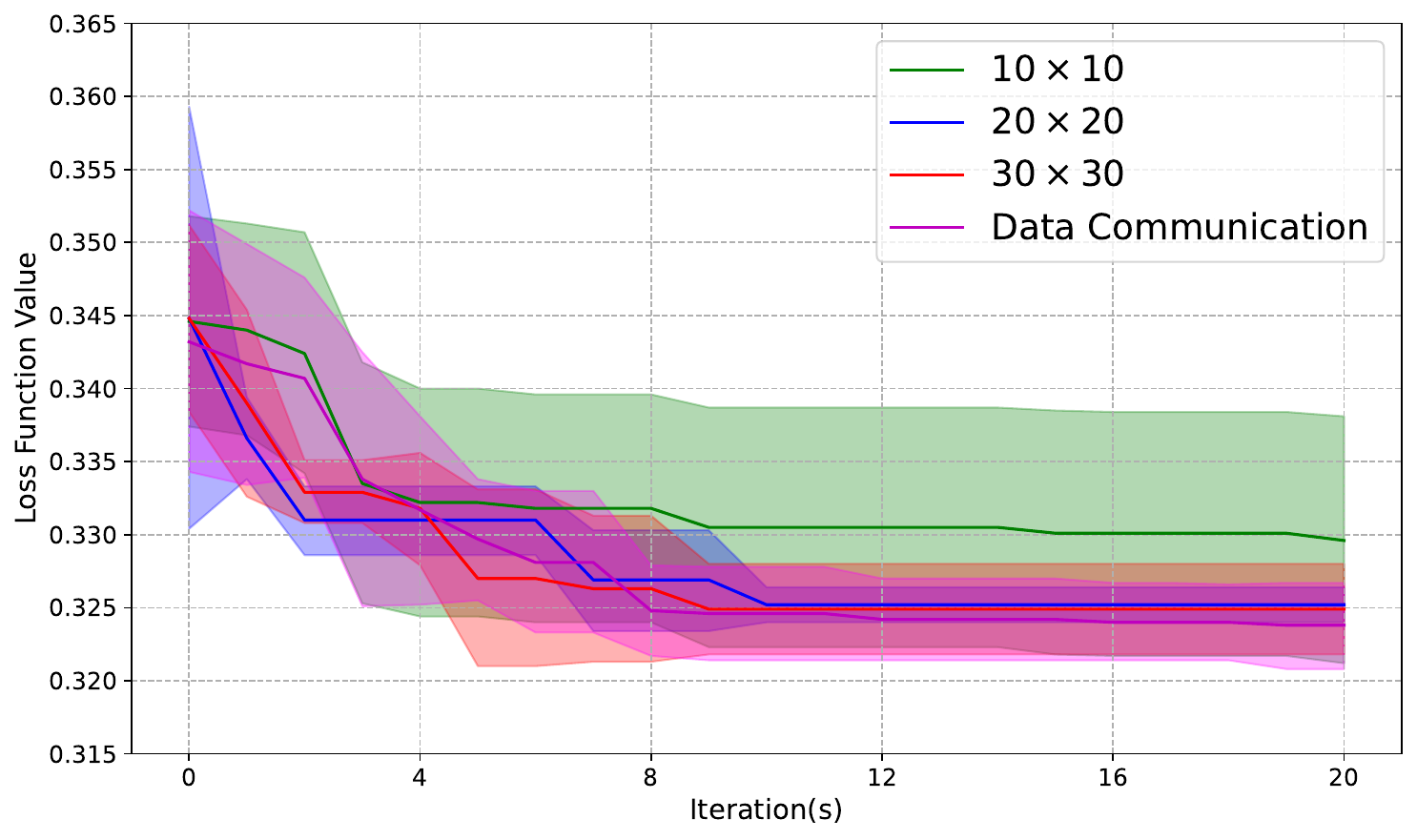}
    \caption{Validation loss in iterations with different discretization sizes on California Housing Dataset.}
    \label{fig3}
\end{figure}

The experimental results are included in Figure \ref{fig3}, and provide the following insights: First, increasing the granularity of mesh grid enhances the performance of the collaborative BO approach with Co-KG, with more accurate central model construction and more flexible decision variable selections. Second, when the grid is precise enough ($20\times20$ and $30\times 30$), our approach is comparable with that without data privacy concerns, which is also supported by the experiments in the main text. Lastly, although increasing the granularity from $20\times20$ to $30\times 30$ enhances the performance of Co-KG, the enhancement is not significant. On the other hand, increasing the granularity significantly increases the computational burdens, we include the average running time per iteration in Table \ref{tab:my_label}. From the table, we observe that the time complexity for discretization is nearly $O(N^{2})$, leading to heavy computational cost if we set a dense discretization for feasible set. Furthermore, we admit that the discretization could be time-varying regarding different iterations and adaptive to the collaborative optimization procedure, while the detailed discussions are left for future work.

\begin{table}[!ht]
    \centering
    \begin{tabular}{c|c}
    \hline\hline
      Discretization mesh grid   & Computational time (s)  \\\hline
      $10\times10$   &   2.02   \\
     $20\times20$    &    6.68  \\
      $30\times30$   &   16.20      \\\hline\hline
    \end{tabular}
    \caption{Computational time with different discretization strategies.}
    \label{tab:my_label}
\end{table}

\subsection{Agent Number Comparison}
We compare the effects of the number of agents. The experimental settings are the same in Section \ref{sec.exp2} with the mesh grid fixed to be $20\times 20$. We consider $N=2,4,8$ in our experiments.

\begin{figure}[!ht]
        \centering
        \includegraphics[width=0.88\textwidth]{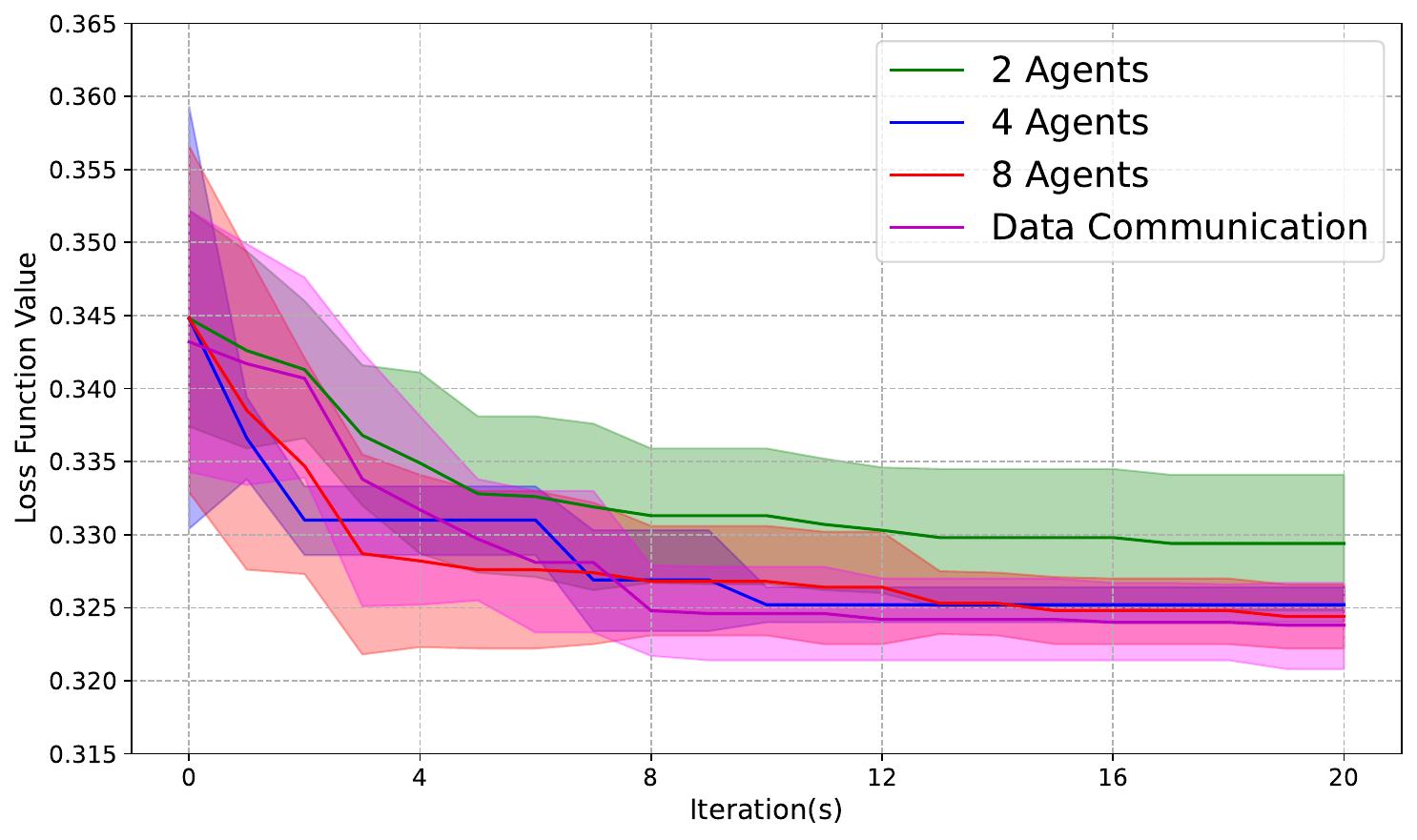}
    \caption{Validation loss in iterations with different number of agents on California Housing Dataset.}
    \label{fig4}
\end{figure}
The experimental results in Figure \ref{fig4} reveal the following insights. First, when the number of agents is low ($N=2$), the performance of Co-KG is suboptimal. Second, comparing the results between $N=4$ and $N=8$, we observe that having more agents does not necessarily lead to better performance, especially during the initial iterations. This is because, with more agents, some may be initialized in less promising regions, which negatively impacts collaboration. The Co-KG function currently assigns equal weights to all agents, which is sensitive to suboptimal observations collected by some agents. Future work could explore assigning different weights in the collaborative acquisition function to enhance algorithm robustness. Finally, as the number of iterations increases, the procedure with $N=8$ agents slightly outperforms that with $N=4$, but the difference is not substantial. Both procedures tend to stabilize without significant improvements, due to the effects of discretization. This suggests that the optimal number of agents may also depend on the level of discretization, a topic that is left for future discussion.

\end{document}